\newcommand{\gry}{incongruity }
\newcommand{\gryp}{incongruity. }
\newcommand{\grym}{incongruity, }
\newcommand{\eps}{\varepsilon}
\newcommand{\ph}{\varphi}
\newcommand{\de}{decision }
\newcommand{\des}{decisions }
\newcommand{\bydef}{\overset{\operatorname{def}}{=} } 
\newcommand{\bb}{\mathbb}
\newcolumntype{L}{>{\arraybackslash}m{7cm}}
\newtheorem{definition}{Definition}
\newtheorem{statement}{Statement}
\newtheorem{theorem}{Theorem}
\title{Logic of Machine Learning }
\author{Marina Sapir }
\date{}
\begin{document}

\maketitle

\begin{abstract}

	The main question is:  why and how can we ever predict based on a finite sample? The question is not answered by statistical learning theory. Here, I suggest that prediction  requires belief in ``predictability'' of the underlying dependence, and  learning involves search for a hypothesis where these beliefs  are violated the least given the observations. The measure of these violations  (``errors'') for given data, hypothesis and particular type of predictability beliefs   is formalized as concept of \gry in 
 modal Logic of Observations and Hypotheses (LOH).  I show on examples of many popular textbook learners (from hierarchical clustering to k-NN and  SVM)  that each of them minimizes its own version of  \gryp
 In addition, the concept of \gry is  shown to be flexible enough  for formalization of some important data analysis problems, not considered as part of ML.

	\end{abstract}
\section*{Introduction}

ML is usually associated with making predictions to improve decisions. Of course,  the  future is unknown, so prediction got to be difficult.  But we can't know much about the past either. 

An applied ML scientist (practitioner) is aware that   nothing is known for sure about the reality we model. For example,  the features may not define completely  the feedback we are trying to model, there is uncertainty in measurements, random mis-classification and so on. Objects with the same features may have different feedback, and the same object evaluated twice may have different features or even feedback.

This is not a bad luck, but a inevitability.  Indeed, if there is no exact theory explaining the phenomenon we are trying to predict, we do not know what it depends on or how to measure it. If there is a theory, one does not need  ML.  ML deals with real raw life, not an abstraction. 

More the over, the time is critical  when it concerns prediction based decision making. It means the less time we spend on accumulating the data for predictions, and the less data we use for prediction, the better. So,  data shortage is not a bug, it is a feature of ML. 

 Thus to predict future we need to model nondeterministic dependence with as little data as possible. 
 
 The first question should be:   what does it mean to model a nondeterministic  underlying dependence, and how do we, actually, do it?

The answer proposed here is that modeling is possible because (i) we implicitly rely on  ``predictability'' of the underlying dependence: close or identical data points shall correspond to close feedback. Violation of this principle is called here ``incongruity''.  And (ii) we search for the least incongruent model of the dependence.

A Logic of Observations and Hypotheses  as well as concept of ``proper aggregation''  are  introduced here to  formalize the idea of \gry of a hypothesis given the predictability beliefs and data.

The main conjecture of this work is that each learner has its own version of \grym and its loss criterion evaluates this \gry for a given hypothesis and the training set. As a part of this main conjecture, I outlined general steps each learner performs, exposing  inner similarity of  diverse learners from $k$-NN to  $K$ Means. 

To the best of my knowledge, here it is shown for the first time  that  the large variety of  learners can be described in common terms and that they have common logical justification.

 \section{Traditional  views on ML} 
 
 Here I describe a ``a naive''  idea of ML and the issues with this idea. I  pose the questions the theory needs to answer.  Then I present the only commonly accepted theory of automatic learning and show that it does not really answer these questions.  

\subsection{Prediction problem} 

Denote $\Omega$ the set of real life objects of interest.  For example, this  may be patients with skin cancer, or bank clients or  engine failures. There is a  hidden  essential quality of the objects we would like to find out (may be, a diagnosis or prognosis).
Some properties (\textbf{features}) of the objects  $\Omega$   can be always evaluated and numerically expressed. Some of them are expected to be relevant to the hidden property.  Suppose, there are $n$ such features.  Denote $X \in R^n$ domain of  feature vectors for objects in $\Omega$.   The hidden essential quality also has numerical expression from  domain $Y \in R.$ The value of the hidden essence in a given object  is called ``\textbf{feedback}''.  We assume there is an ``\textbf{underlying dependence}''  $\varphi: X \rightarrow Y$ between feature vectors  and the feedback. Yet, we can not assume that the dependence is deterministic.

The information about the underlying dependence $\ph$  is given as (imprecise) observations about values of  feedback  in certain data points and can be recorded as set of  formulas 
$\{\ph(x) \approx y\}$ or as  set of tuples $\{\langle   x, y \rangle\}.$  

The set of recorded observations is called \textbf{training set}. 
In a Prediction problem, the goal is  to find a function $f: X \rightarrow Y$, which is ``close enough''  to the underlying dependence $\varphi$, in the sense that the probability of large errors on  future observations of objects in $\Omega$  is low enough: 
$$P[| \varphi(x) -  f(x)| > \delta] < \epsilon.$$

The Prediction problem is ill posed: generally, knowing a finite set of observations   with non repeating  data points does not imply anything about expected values of feedback $\ph(x)$ in the same data points, let alone in other points. Strictly speaking, the givens and goal  of the problem are not related. 



Any theory of machine learning needs to answer at least two \textbf{fundamental  questions:}
\begin{enumerate}
	\item Q1:  What shall be done with the training set for learning ?
	\item Q2: When and why can a decision  predict well enough?
\end{enumerate}
	
In the next subsection, I show how  statistical  learning  theory understand machine learning.

\subsection{Statistical Learning Theory Approach}

Statistical Learning  (SL) theory is  the only commonly accepted theory of ML. The most popular version of this  theory is also called ``VC-theory'', because VC-dimension plays important role here.

There is some confusion about the term ``learner'' in learning theory textbooks. For example,  in \cite{Shalev},  the term is understood as a procedure for solving a problem  in a finite number of steps -  when actual procedures are discussed. But in chapters   talking about statistical learning theory (PAC learning, VC-theory) the  terms  ``learner''  and "algorithm" mean ``functional that takes in a finite training set  $S$
and outputs a function $h: X \rightarrow Y $ '' \cite{CPAClearning}: 
``VC-theory did not impose any requirement on the learners actually being implementable by algorithms''. For disambiguation, talking about SL, I will use the term ``learning functional''.

Thus, strictly  speaking, statistical learning theory does not talk about  the main subject of this work, the learning algorithms.  Still, I will describe its main results here because of its unique importance.  

Only binary labels  $Y$ are considered here.

Denote $D$ the distribution on $X  \times Y$ from which the training $S$ set is drawn. Denote 
$$L(f, S) =\sum_{S} I(f(x_i) \neq y_i),$$
empiric risk of a a hypothesis $f: \chi \rightarrow Y$ on the training set $S,$ 
$$L(f, D) = P[f(x) \neq y]| \langle x, y \rangle   \sim D],  $$ a generalized risk, or error on the general population, 
$$L(H, D) = \min_{f \in H} L(f, D), $$
optimal generalized risk in the class of hypotheses $H.$

The hypothesis  $h = \alpha(S)$ output by a learning functional  $\alpha$  on a given training set  will be called a\textbf{ decision} of this functional on $S.$

For a given $\eps > 0$  a hypothesis $h$ is \textbf{approximately correct} for the class of functions $H$  if $L(h, D) < L(H, D) +  \eps$. Denote $ A_\eps (h, D)$ this property
$$ A_\eps (h, D) \bydef \left( L(h, D) < L(H, D) +  \eps\right).$$

As the work \cite{CPAClearning} clarifies, the approximately correct for the class $H$ \de  $h$ does not have to belong to the class $H, $ and it   does not have to be in such form that one could use it to calculate the function $h$ on any arguments.

Instead of considering a single \de on a given training set $S$,  statistical learning is interested in all the  \des obtained on  different training sets with a given  lower bound $m$ on their size.  Denote $ Z(\alpha, m, D)$  set of all  the \des  by the  learning functional  $\alpha$ on all the training sets of  the size $m$ or larger randomly generated by the same distribution $D$.

For a given constant $\delta$, if 
$$\underset{h \in Z(\alpha, m, D)}{P}[ A_\epsilon(h, D)] > 1 - \delta,$$ 
every \de   $h \in Z(\alpha, m, D)$  is called\textbf{ probably approximately correct}.  Denote this property 
$$ B( \alpha, m, D, \epsilon, \delta) \bydef  \left( \underset{h \in Z(\alpha, m, D)}{P}[ A_\epsilon(h, D)]  > 1 -  \delta\right).$$ 

 For a given class $H$, a learning functional  $\alpha$ is called \textbf{successful learner}, if    for every $\eps, \delta, $   regardless of distribution $D,$  for a large enough $m(\eps, \delta)$  predicate $B( \alpha, m(\epsilon, \delta), D, \epsilon, \delta)$ is true.   

The main focus in the theory is classes of PAC-learnable  functions.  The class $H$ is called \textbf{PAC (probably approximately correct) learnable} if there exists a successful learning functional for this class.

``Our current goal is to figure out which classes $H$ are PAC learnable, and to
characterize exactly the sample complexity of learning a given hypothesis class.'' \cite{Shalev} (sample complexity is politically correct name for minimal size of the training sample.)  

\textbf{The main result of the theory is that classes of functions which have finite VC dimension and only such classes are  PAC learnable.} The theory claims there is a function $m(\epsilon, \delta, VCd(H) )$, which gives lower bound of the parameter $m$ in the set  $Z(\alpha, m, D)$ of PAC decisions for PAC-learnable classes. Also, it states that if class is PAC learnable, then a functional which picks a functions from the class $H$ with minimal empiric risk is a successful PAC learner.

Here are some issues with SL.
\begin{enumerate}
	\item \textbf{The theory solves an irrelevant problem}. The main result  of the theory 
	\begin{itemize}
	\item expresses required size of the training set through  VC dimension of the function class $H$, yet the decisions may be outside of the class $H$; the relevance of the class  $H$ is not clear; 
	
	\item makes statement about an arbitrary  distribution, yet  most of the distributions  are of no interest because they do not support the  existence of  an underlying dependence;  it should be easier to find a dependence when it exists.   
	
		\item evaluates  probability of having ``approximately correct'' hypothesis  in   $Z(\alpha, m, D).$  This set of hypotheses is pure speculation: there is only one training set. But even we imagine $Z(\alpha, m, D)$,  infinitely many hypotheses in this set are obtained on training sets arbitrary larger than $m$ and non is obtained on sets  smaller than $m$. Therefore even if probability of failure on  $Z(\alpha, m, D)$ is low, it implies  nothing about likelihood of failure on a given training sample of the size $m$. 
		
		\item recommends empiric risk minimizing functional of a class $H$ as a successful PAC learner, but, if the class $H$ is infinite, there may not be an algorithm implementing  this functional in a finite number of steps.  
		
		\item talks about relative generalized risk, while practical applications are interested only in absolute generalized risk.
	\end{itemize}

\item  \textbf{ Indefinitely  increasing training set size contradicts the idea of learning}. 
\begin{itemize}
		\item  \textbf{The training set does not increase.}
	``Intuitively, it seems reasonable to request that a learning algorithm, when presented more and more training examples, should eventually “converge” to an optimal solution." \cite{StatTheory} There is nothing intuitive or reasonable about such a request, because training set is fixed.	
	
\item \textbf{The bounds of the sizes of the training sets are unreasonable} The theory gives upper bounds for the desired training sample size. These sizes are grotesque. There are no training sets of such sizes for most of ML problems. 
 
 	\item\textbf{ Large training sets are counterproductive. } Timing of decisions which are supposed to be made on the basis of ML  is critical, therefore smaller training sets are preferable. Large training sets as the theory requires   would  describe the master distribution  in very fine details, making ML pointless.  People do ML exactly because it allows one to  compensate  for lack of accumulated knowledge. 

\end{itemize}  
 
\item \textbf{SL ignores both critical questions}
\begin{itemize}
\item The theory deals with  learning  functionals and not algorithms, so it can not answer the first critical question (``What shall be done with the training set for learning ?''). 

\item Being interested only in relative loss, SL can  not  answer the second critical question: ``When and why can learner's  decision  predict well enough?''   
\end{itemize}

  \end{enumerate} 

Theoreticians usually do not dwell on these issues. But it does not mean that they do not notice them. 

One of fathers of SL,  V. Vapnik formulated the justification  for SL theory  in the most direct way \cite{VapnikBook}:

\begin{quotation}
	\textit{	Why do we need an asymptotic theory $\langle \cdots \rangle$ if the goal is to construct algorithms from a limited number of observations?
		The answer is as follows:
		To construct any theory one has to use some concepts in terms of which the theory is developed $\langle \cdots \rangle.$ }
\end{quotation}

I other words, SL has to use the statistics toolkit. Statistics  has laws of large numbers, and it is what one uses for deduction  in statistical learning theory.

Perhaps, lack of the suitable apparatus to understand  the true problem with fixed finite data  and un-quantifiable  uncertainty is the root of this divergence between the theory and needs of the applications. 

The theory so remote from real life applications  cannot help  practitioners, does not answer the most common questions they raise \cite{Papaya}. 

Therefore, there is a  need in a learning theory which would make sense of   the actual practice of ML. I show below that, based on popular learners,  learning is possible not because of  ever increasing training set, but because  the  underlying dependence is expected to be ``predictable'' or ``congruent''.

\section{Predictability} \label{Raugh}

I am not trying, vainly, to forecast  accuracy of a decision on the general population. Rather, I want  to answer the first critical question: what do we do with the training  set in applied ML, and what is the logic behind these manipulations?

\subsection{Existing Approaches to Logics of Uncertainty} 

There are plenty of well established logical approaches to reasoning under uncertainty,  as well as to study of  nondeterministic dependencies  and modeling inconsistent data.  

Modal logics are introduced to take into account some subjectivity and uncertainty.  Yet, they would not tolerate inconsistency.  

Fuzzy logic  \cite{Fuzzy} and Subjective Logic \cite{Subjective} would not help either, because they  assume there is an objective omnipotent observer, who   can quantity degrees of certainty or belief about  given statements. The systematic review \cite{ReviewUncertainty} describes various approaches to reasoning under uncertainty as ways to quantify and exactly measure uncertainty of statements and sets of statements. Exactness about uncertainty of empirical observations appears to be a  contradiction in terms. 

A typical approach to resolve inconsistency of knowledge is to assign some kind of ``certainty'' or ``preference'' for each formula, and then select a the most preferable (``probable", ``certain'', ``reliable'') subset of consistent formulas. One of the first works of this type was \cite{Inconsistency}. The main idea there is  to assign reliability to each statement and remove the least reliable ones to avoid a contradiction. Here are the main issues with this approach:
\begin{enumerate}
	\item When we are not certain about the knowledge, we can not be certain about comparative reliability of it. 
	\item  For a nondeterministic dependence,  contradictory observations are the rule, not an exception. Together they create more complete picture of reality than any non-contradictory subset.  
	\item In case of ML  contradictions between a  hypothesis and  noisy observations shall be present always: exact fit of noise is not desirable.  It means, excluding inconsistency is not an option.
\end{enumerate} 

Nondeterministic logics \cite{Nondeterministic} and logical operations with nondeterministic tables are introduced to derive logical functions from exact but incomplete data, which is different from the situation of inconsistent  observations or inconsistency between hypothesis and the training set.

There are several approaches to describe logic of learning. For example, \cite{Epistemic} considers asymptotic learning: precise observations are presented indefinitely, the ``nature'' has in mind particular function from the given class, and the learner has to chose the correct  hypothesis. Unfortunately, in applied ML,  all the good assumptions about this learning idea are false: the training set is finite and too small, observations are known to be tentative, and whatever nature has in mind,  is far from the selected class of functions or, rather,  is not a function of given features at all. 

The researchers already noticed that logic where all formulas have truth values does not describe certain types of logical reasoning \cite{NoTruth}, particularly legal reasoning about existing norms. The logics with modalities like ``it ought to be", ``you ought to do" are introduced, but to the best of my knowledge, epistemic modalities which can not be associated with  truth values were not explored.

\subsection{Informal description of the new approach}

Let us go back to the original prediction problem: Given imprecise observations of non-deterministic dependence $\ph$, to find a function $f$ to model $\ph$ and predict  its values on new data points. I noticed that the problem is incorrect, can not be solved as is.

In real life, the solution of a ML problem is possible, because we believe   in  ``predictability'' of  $\ph$: the dependence has to have similar values in close data points. If we do not believe it,   there is no problem. 
 
    And we, usually,  have good reasons for predictability belief.  ML problems do not appear from nowhere. They are thoroughly prepared by the same specialists who solve them.  Preparation includes posing  a meaningful  question and developing the features to be predictive. 
    
    From philosophy point of view,  predictability beliefs are founded  on  the  \textbf{fundamental belief} in  inner congruence, ``consistency''  of the reality: the world does not, usually, change sharply. Otherwise, the  prediction would not be possible and we could not exist as animals, let alone humans: the brain is an organ of prediction in all animals \cite{Brain}. The fundamental  belief   leads us further to prefer  models which appear to be more ``consistent''  with the the  available data: if the model was more ``agreeable'' in the past, we have expectations that it will continue to be. Thus, we usually assume that optimal agreement with available data  is a desirable property of the model. 
    
      Instead of searching for  the model which will work good in the future -  which is impossible -  we search for  a  model which works  on accumulated data both well enough and  the best among others. And this problem is, often,  tractable, at least approximately. 
    
   As predictability may take different forms, depending on the problems, so does \gryp   To understand the process of learning  we would need a general concept of incongruity.
   
   In the  simplest case, for each observation $\langle x, y \rangle$ , we take a hypothetical instance $h(x) = y_1$ where the value of the underlying  dependence $\ph$ are evaluated  in the same point $x$, and measure how the values $y_1, y$ are different. 
   
In general, the concept of \gry requires (i) identifying associations between hypothetical instances and observations  and (2) evaluating the disagreement, or deviation,  for each such associated pair. 

This will give us the set of deviations for each hypothesis. So, on top of this,  one would need to aggregate these deviations, so each hypothesis can be characterized by a single number.

One may view  ``deviation'' as a fuzzy measure  of contradiction. If $y_1 = y$ in the above example, there is no contradiction between the observation and the associated hypothetical case.  Yet, the values $y, y_1$ are not expected to be identical. The concept of logical contradiction is meaningless for a nondeterministic function and its model. So, deviations evaluate degree, to which the desired ``consistency'' is violated.  Then value of  \gry is a fuzzy measure replacing rigid concept  of inconsistency of set of formulas. 

The conjecture of this work is that every learner has its own   concept of  \gry  as a loss criterion to compare  hypotheses  and derive the decision. The conjecture will be corroborated on examples of many learners.  

To avoid inconsistency in  reasoning about nondeterministic dependencies,  I introduce  subjective modalities of  perceptions or assumptions. These modalities allow contradictions. For example, different subjects (or even the same subject)  can observe things differently in different times.  This makes  the concept of inconsistency of our knowledge irrelevant.

\section{Formal definition of the approach} 

Modal logic of observations and hypotheses (LOH)  formalizes reasoning about  predictability and  deviations from predictability belief.   The   instances are presented as first order modal formulas.  The second order relations on the first order formulas will be used to describe predictability.

\subsection{Logic of observations and hypotheses (LOH) }

The first order signature has four  sorts:

\begin{table}[H]
	\begin{center}
		\caption{Sorts of LOH}
		\label{tab:table1}
		\begin{tabular}{| l | l | l | l| l | }
			\hline
			& Sort  & Meaning  & Variables  & Constants \\ \hline
			1&  $\bb{X}$ & finite numeric set from $\bb{R}^n$  & $x, x_1, x_2, \ldots$ &\\
			2 & $\bb{Y}$ & finite numeric set  & $y, y_1, y_2, \ldots$ & \\ 
			3 & $\Theta$ &   symbols of modalities & $s, s_1, s_2, \ldots$ & \\
			4 & $\bb{R} $ &  real numbers & $r, r_1, r_2 \ldots $ & $a, b, a_1, a_2, a_3, \ldots,$ \\
			\hline
		\end{tabular}
	\end{center}
\end{table}

We assume the domains of the sorts $\bb{X}$ and  $\bb{Y}$ are subsets of some metric spaces. For example,  the  set $Y$ with two values $\{0, 1\}$ may be considered a metric space with the norm
\[
\|y_1 - y_2\| = 
\begin{cases}
    0, & \text{if }  y_1 = y_2\\
    1, & \text{otherwise.}\\
\end{cases} 
\]
It is obvious that the norm satisfies the axioms of metric spaces.

We will consider the next symbols of modalities $\{\approx, \approx_1,  \ldots,  \asymp, \asymp_1, \ldots\}.$ The symbol $\asymp$ is interpreted as \textit{Assume that}, it indicates the hypothetical instances.  The modalities $\approx$ are interpreted as \textit{It appears that} and indicate observations. The index in a notation of  modality are used to separate   groups of hypothetical instances and groups of observations with different context. The number of  modalities may be different between interpretations.

There is one dedicated first order unary  functional symbol $\ph:  \bb{X} \rightarrow \bb{Y},$ which denotes the   underlying dependence,  and  the only first order relation $=$ with standard interpretation.  

Interpretations of this logic may have  optional  other sorts,   first order functional symbols and operations, which will be specified in formalization of each problem as needed. 

All the first order formulas have the form $\square (  \ph(x) = y ) $
where  $\square \in \Theta$ is one of the modalities. The formula $\approx(\ph(x), y)$ corresponds to  an observation $\langle x, y \rangle$, the formula $\asymp(\ph(x) =  y)$ corresponds to the hypothetical instance $\langle x, y \rangle.$

The formulas are not assigned truth values:  they reflect subjective, uncertain knowledge. Nothing can be deduced from these formulas. There is no logical connectives, no first order inference. 

The set of all first order formulas of a LOH model $\mathcal{M}$ will be denoted as $\Upsilon(\mathcal{M}). $ The set $\Upsilon(\mathcal{M})$ is the domain of the second order functions and predicates. The variables $\alpha, \alpha_1, \ldots$ will denote formulas from $\Upsilon(\mathcal{M}).$

\begin{table}[h!]
	\begin{center}
		\caption{Second order function symbols}
		\label{tab:Functions}
		
		\begin{tabular}{|l | l | l | l | l |  }
			\hline
			& Symbol   & Arity & Sorts  & Semantic   \\ \hline
			1 & $\bm{x}$  & 1  & $\Upsilon(\mathcal{M})  \rightarrow \bb{X}$ &  the first variable   \\
			2 & $\bm{y}$  & 1  & $\Upsilon(\mathcal{M})  \rightarrow \bb{Y}$ & the second variable    \\
			3 &  $\bm{s}$ & 1  &  $\Upsilon(\mathcal{M})  \rightarrow \Theta$  & modality symbol\\
			4 & $\rho_x$ & 2  & $\Upsilon(\mathcal{M}) \times \Upsilon(\mathcal{M}) \rightarrow \bb{R}$ &  distance between first variables  \\
			5 & $\rho_y$ & 2  & $ \Upsilon(\mathcal{M})  \times  \Upsilon(\mathcal{M})   \rightarrow \bb{R}$ &  distance between second variables \\
			\hline
		\end{tabular} 
	\end{center}
\end{table}
\bigskip
\hskip2pt

The second order  functions  are defined by the  axiom
$$ \mathcal{A}_1: \;  \forall \alpha  \; \Big( \bm{s} (\alpha) \; \big( \; \ph( \bm{x}( \alpha) ) = \bm{y}(\alpha) \;\big) \Big) = \alpha.$$

$$\mathcal{A}_2: \forall \alpha_1, \forall \alpha_2 \; \rho_x(\alpha_1, \alpha_2) = \|\bm{x}(\alpha_1) - \bm{x}(\alpha_2)\|.$$

$$\mathcal{A}_2: \forall \alpha_1, \forall \alpha_2 \; \rho_y(\alpha_1, \alpha_2) = \|\bm{y}(\alpha_1) - \bm{y}(\alpha_2)\|.$$

The second order formulas will have 
\begin{itemize}
	\item relations $\le, < , > , \ge, =,$ on real numbers, 
	\item regular logical connectives ($\vee, \&$)
	\item   real valued constants. 
\end{itemize}

\subsection{Incongruity Concept}

Informally,  \gry means that for associated pairs of formulas, when arguments of the underlying dependence in them   are ``close'', so are their  feedback. 

\subsubsection{Main Definitions}

\begin{definition}[Collision condition]
	Any second order LOH  statement with two free variables  over  first order formulas  will be called  an \textbf{collision} condition. 
\end{definition}

Figuratively speaking, an collision relation  identifies pairs of formulas which potentially can be ``vaguely contradictory''.  For such a pair of formulas, ``deviation'' will determine the degree of  its  ``vague contradiction'', or collision.

\begin{definition}[Deviations] 
	\textbf{Deviation function} is a function $t(r_1, r_2): \bb{R}^+ \times \bb{R}^+ \rightarrow \bb{R}^+ $ isotone by $r_1$ and antitone by $r_2.$ 
  
  For a pair of first order formulas $\alpha_1, \alpha_2$ and a deviation function $t$  their \textbf{deviation degree} is 
  $$ \delta(\alpha_1, \alpha_2) = t(\rho_y(\alpha_1, \alpha_2), \rho_x(\alpha_1, \alpha_2)).$$
\end{definition}

Deviation degrees are often called ``errors''  in ML. 

There may be many aspects of \gryp  This gives rise to the concept of \gry theory. 

\begin{definition}[Incongruity theory]
	A sequence of collision conditions  and corresponding deviation functions $$\{ \langle \pi_i, \; t_i \rangle, i = 1:k\}, k \ge 1$$  is called a \textbf{\gry theory}. 
	Each pair $\langle \pi_i,  \; t_i \rangle$ is called \textbf{\gry aspect}. 
\end{definition}

 For a \gry  theory $T$ the notation   $T  \vdash \langle \alpha_1, \alpha_2 \rangle$  means the theory includes an collision condition $\pi$ such that $  \vdash \pi(\alpha_1, \alpha_2),$ the pair $\alpha_1, \alpha_2$ is in the collision condition $\pi.$
 
 \subsubsection{Example of an \gry theory}
 
This   example of a \gry theory is called \textbf{Point-Wise} \gry theory, $T_{pw}$. It has only one aspect with the collision condition 
 \begin{align}\label{pointwise}
 	\pi_{pw}(\alpha_1, \alpha_2)  = \Big( ( \bm{s}(\alpha_1) \; = \; \asymp)  \; \& \; ( \bm{s}(\alpha_2) \; = \; \approx)  \; \& \; ( \bm{x}(\alpha_1) = \bm{x}(\alpha_2) \Big)
 \end{align}
 and the deviation function $t(r_1, r_2)  = r_1.$
 The function  $t$ is obviously isotone by the the $r_1$. The function does not depend on $r_2$, so for any $r_1, r_2^1, r_2^2 \; t(r_1, r_2^1) = t(r_1, r_2^2).$ Therefore, the condition of antitony by the second variable is not violated. 
 
By definition, hypothetical instance $\alpha_1$  and an observation $\alpha_2$ satisfy the  point-wise collision condition  $\pi_{pw}(\alpha_1, \alpha_2)$ when $\bm{x}(\alpha_1) = \bm{x}(\alpha_2).$ 
 For any pair of first order formulas $\alpha_1, \alpha_2: \; \vdash \pi_{pw}(\alpha_1, \alpha2)$ their deviation $\delta(\alpha_1, \alpha_2) = \rho_y(\alpha_1, \alpha_2).$
 
\subsubsection{Full Model} 
Various models of LOH will have various sets of   first order formulas. The same theory may have different set of deviations depending on the model.  There needs to be an agreement about a LOH model used to evaluate deviations for a given theory, hypothesis  and observations.

\begin{definition}[Full model]
	Given an \gry theory $T = \{\langle \pi_i, t_i \rangle, i = 1:k\}$, hypothesis $h$ and the set of observations $S$, a model $\mathcal{M}$ is a\textbf{ full model} for $h, S, T$ if
	
	\begin{itemize}
		\item $S \subseteq \Upsilon(\mathcal{M})$
	  \item For any hypothetical formula $\alpha$ and for any observation formula $\beta \in \Upsilon(\mathcal{M}) $ if  
	  \[(T \vdash(\alpha, \beta) ) \; \vee  \; ( T \vdash(\beta, \alpha) ) \] then $\alpha \in \Upsilon(\mathcal{M}) $
	  \item For any hypothetical formulas $\alpha_1, \alpha_2$ such that $T \vdash (\alpha_1, \alpha_2)$  the formulas  $\alpha_1, \alpha_2$ are in $ \Upsilon(\mathcal{M}). $
	  \item $\Upsilon(\mathcal{M})$ is the minimal set of formulas, satisfying these conditions. 
	  \end{itemize}
\end{definition}

Given the hypothesis, observations and \gry theory, the definition determines the set of first order formulas of a full model uniquely.

\subsubsection{Example of a  full model}

For example, for the $T_{pw}$ theory, any hypothesis $h$ and the set of observations $S = \{ \beta_1, \ldots, \beta_m\}$ the set of formulas $\Upsilon \big(\mathcal{M}(h, S, T_{pw}) \big)$ will consist of the next two subsets
	\begin{itemize}
		\item $S$
		\item $ \{ \alpha\;  | \; (\bm{s}(\alpha) = \asymp ) \; \& \; (\exists \beta \; (\beta \in S) \; \& \; (\bm{x}(\alpha) = \bm{x}(\beta)\}.$
	\end{itemize}

For a given set of $m$ observations $S$ and a hypothesis $h$ full model of the theory $T_{pw}$ will have all the first order formulas of the observations $S$, and for every observation $\beta \in S$ there will be a hypothetical formula 
$$\Big( \asymp \big( \ph(\bm{x}(\beta)) = h(\bm{x}(\beta) )  \big) \Big).$$ So, there will be exactly $m$ pairs of first order formulas $\alpha_1, \alpha_2$ satisfying the condition $\pi_{pw}.$

\subsubsection{Regularization}

Usually, predictability of a dependence means that it has close values on close data points. 
When a hypothesis is a known differentiable  function, there are ways to evaluate some aspects of its  ``predictability'' independently of data. A good, predictable hypothesis shall be smooth, it has to have uniformly low derivatives.  

This method of including some measure of derivatives in the loss criterion of a learner  is  called ``regularization''.  Regularization is  used  sometimes in addition to   \gry to  measure violation of predictability.

\subsection{Proper aggregation} 

An aggregation operation maps a multiset of real numbers into a real number. 

The operation of aggregation  $TOT(G): 2^{\bb{R}} \rightarrow \bb{R}$ defined on all finite multiset in $ \bb{R}$ is called \emph{proper aggregation}, if it satisfies  \textbf{three axioms}.

\begin{enumerate} 
	\item \emph{\textbf{Monotony }}: For two multisets in $\bb{R}$ If there exists isomorphism $q:$  $G_1 \rightarrow G_2$ such that 
	$$\Big(\forall x\;  (q(x) \ge x) \Rightarrow  TOT(G_2) \ge  TOT(G_1) \Big) \&$$ 
	$$\Big( \forall x \; (q(x) >  x) \; \Rightarrow \; TOT(G_2) >  TOT(G_1) \Big)$$
	\item \textbf{ \emph{Idempotence}}:   $ TOT(G \cup \{TOT(G)\}) = TOT(G). $
	\item \textbf{\emph{Tautology: }}  If $G = \{x\}$ then $TOT(G) = x.$
\end{enumerate}

Some natural properties of proper aggregation follow from the axioms.

\begin{statement}
	Any proper aggregation $TOT(G)$ has the next properties:
	\begin{enumerate}
		\item If the multiset $G$ consists of $n$ identical elements $x$, then $TOT(G) = x.$
		\item $min(G) \leq TOT(G) \leq max(G).$
	\end{enumerate} 
\end{statement}
\begin{proof}
	\begin{enumerate}
		\item Let us prove it by induction by $n = \|G\|.$ It $n = 1,$ it follows from the axiom Tautology. Suppose, the statement is proven for $n = k$.  Then for $n = k + 1$ it follows from the axiom  Idempotence. 
		\item  Let us prove by contradiction. Suppose $$\exists G \forall x \;\: (x \in G)  \Rightarrow (TOT(G) > x).$$
		Denote $n = \|G\|.$ By the previous property,  if the  set $G_1$  consists of $n$ elements  $TOT(G)$ then $TOT(G_1) = TOT(G).$ It contradicts the axiom of  Monotony since every element of $G_1$ is larger than all elements of $G.$ The same way we can prove that $TOT(G)$ can not be lower than all elements of $G.$		
	\end{enumerate}
\end{proof}

 One example of proper  aggregation operation is $\mu(G),$ median on $G \subset \bb{R}.$ 
 \begin{statement}
 	Operation $\mu(G)$ is a proper aggregation.
 \end{statement}
\begin{proof}
Let us prove monotony. Denote $q$ isomorphism $G_1 \rightarrow G_2: q(x) \ge x$ and $${\rho_i  = \mu(G_i), \; i = 1,2.}$$  Because the sets are isomorphic, they have the came power ${\|G_1\| = \|G_2\| = n.}$  

For some integer $k: \; n = 2 \, k$ or $n = 2\, k + 1$. 
The number of elements in $G_2$ which are larger than $\rho_2$ is the same as the number of elements larger than $\rho_1$ in $G_1$. In both cases and for both sets the number is equal $k$. 

. 
Denote 
$$G_i^- = \{x \; | \;(x \in G_i) \; \& \; (x < \rho_1) \},  \text{  for  } i = 1, 2.$$
$$G_i^+ = \{x\;  | \;(x \in G_i) \; \& \; (x > \rho_1) \},  \text{  for  } i = 1, 2.$$

By definition of $q$,  for any $x \in G_1^+,\; q(x) \in G_2^+. $
So, $\|G_2^+\| \ge \|G_1^+\|.$

Suppose,  $\|G_2^+\| >  \|G_1^+\|.$  
This means, $\|G_2^+\| > k.$  Therefore $\rho_2 \in G_2^+$ and $\rho_2 > \rho_1$. It proves the theorem for the case $q(x) \ge x.$ 

Suppose, $\|G_2^+\| =  \|G_1^+\| = k$  
In this case, $\|G_2^-\| =  \|G_1^-\| = k$  and for every element $x \in G_1^-$ $q(x) \in G_2^-.$

First, suppose $n = 2 \, k + 1.$ 
Then $\rho_1  \in G_1.$ The only element of $G_2$ which does not belong to $G_2^-, G_2^+$ is $q(\rho_1)$. Therefore $q(\rho_1) = \rho_2$, and $\rho_2 \ge \rho_1. $

Now, suppose $n = 2 \, k$ In this case, for $i = 1,2$
$$\rho_i =  \mu(G_i) = \frac{\min(G_i^+) + \max(G_i^-)}{2}.$$

Since $q(\max(G_1^-) ) \ge \max(G_1^-)$ and $q(\max(G_1^-) ) \in G_2^-$ then $\max(G_2^-) \ge \max(G_1^-).$

Let us notice that  $x \in G_1^+$ if and only if $q(x) \in G_2^+$. For any $x \in G_1^+:\; q(x) \ge x \ge \min(G_1^+)$ therefore $\min(G_2^+) \ge \min(G_1^+).$ It follows that $\rho_1 \le \rho_2$ in this too. 

This proves the monotony for the case, when $q(x) \ge x.$
The case when for every $x: q(x) > x$ is proven similarly. 

Let us prove the idempotence. Denote $\rho = \mu(G), \; b < \rho < c $ are two closest elements in $G$ to $\rho$. Suppose, $\|G\| = 2k$ and $\rho = (b + c)/2.$  Then   $\mu(G \cup \{\rho\} ) = \rho. $ Suppose $\|G\| = 2k + 1.$ Then $G \cup \{\rho\}$ has has two identical elements equal $\rho$ in the middle. And $\mu(G \cup \{\rho\}) = \rho.$

Tautology is trivial, because median of $\{x\}$ is $x$. 
\end{proof}

The statement  could be proven not only for median, but for any percentile. So any percentile can be used as a proper aggregation.

\subsection{Total proper \gry}

If an \gry theory has $k$  aspects, then each hypotheses will be characterized by $k$ sets of deviations. To compare hypotheses,  the deviations need to be aggregated. For this purpose, we use two step procedure: first deviations for each aspect are aggregated using it own  proper aggregation, then the results of these operations are further aggregated (along with some regularization components, possibly) to have a single number representing \gry  for a given hypothesis.  

The result of the aggregation of deviations for a single aspect will be called \textbf{aspect \gryp}
Given training set $S$, hypothesis $h$ and \gry theory $T$, a procedure $TOT(G)$ will be called\textbf{ total proper aggregation} procedure 
if it satisfies three conditions: 
\begin{enumerate}
	\item The set $G$ contains all aspect \grym each obtained with a proper aggregation  on the  full model of $S, h, T.$
	\item In addition, the set $G$ may contain regularization components.
	\item The operator $TOT(G)$  is \textbf{isotone}:  For any multisets $G_1, G_2$ and real numbers $x, y$ $$( G_1 =  (G \setminus \{x\}) \cup \{y\}) \; \& \; (y \ge x) \Rightarrow TOT(G_1) \ge TOT(G).$$	
\end{enumerate} 
The result of applying a  total proper aggregation procedure on aspects of \gry and regularization components will be called \textbf{total proper \gryp}

\subsection{Logic of recursive aggregation}

One drawback of using percentiles for aggregation is, perhaps,  the non-linear calculation complexity. Learners usually prefer to use aggregation which requires   going through all the elements of  the multiset $G$ only  once.

To express recursive aggregation, I will use extension of the first order logic with added counting quantifiers \cite{Libkin} $\exists^{=c} x, $  where $x$ is a variable, and $c$ may be a natural number or variable with values in $\bb{N}. $ The quantifier means: 
there exists exactly $c$ of $x.$

There are three sorts. 

\begin{table}[h!]
	\begin{center}
		\caption{Sorts}
		\begin{tabular}{| l | l | l | l| }
			\hline
			& Sort  & Meaning  & Variables  \\ \hline
			1 &  $\bb{G}$ & finite set of real numbers& $ x, y, z  $ \\
		  2 & $N$ & $ N = \{1, \ldots, n\}, \; n  = \|\bb{G}\|$   & $i, n, i_1, n_1, \ldots$ \\
			3 & $\bb{R}$ &  space of real numbers & $r, r_1, r_2, \ldots$  \\					
			\hline
		\end{tabular}
	\end{center}
\end{table}

There is total order $ \prec $ on the domain $G.$ The functions in the language of aggregation  are described in the next table

\begin{table}[h!]
	\begin{center}
		\caption{Function symbols}
		
		\begin{tabular}{|l | l | l | l | l |  }
			\hline
			& Symbol   & Arity & Sorts  of arguments  & Semantic   \\ \hline
			1 & $\bm{scale} $  & 1   & $\bb{G}  \rightarrow \bb{R} $ & scaling    \\
			2 & $\bm{count}$  & 0 & $\emptyset  \rightarrow N$ & cardinality of $\bb{G}$ domain    \\
			3 &$ \bm{plus}$ & 3  & $\bb{R}, \bb{R} \rightarrow \bb{R}$ &  compounding \\
			4 & $\bm{agg}$  & 1  &  $ N\rightarrow \bb{R}$  &  recursive aggregation  \\
			5 & $\bm{get}$  & 1 & $N\rightarrow \bb{G}$ & get i-th in order $<$   \\
		    6 & $ \bm{norm}$ & 2  & $ \bb{R}, N  \rightarrow \bb{R}$ &  normalization   \\
			\hline			
		\end{tabular} 
	\end{center}
\end{table}

\newpage
\subsubsection{Theory of recursive aggregation}

The order $\prec$ on $\bb{G}$ is defined as a strict total  order (with axioms of irreflexivity, transitivity, anti- symmetry and total order). The relationships  $\{<, >, \le, \ge, =\}$ are defined in usual way on real numbers. 
The functions $\bm{get}(i), \bm{count}()$ are defined uniquely as $i$-th element in the order $\prec$ and the cardinality of $G$ when the domain $G$ of the sort $\bb{G}$ and the order $\prec$ on it are known:
$$\forall x  \forall i  \; (\bm{ge}t(i) = x ) \Leftrightarrow (\exists^{=i-1}y  \;\; y \prec x )  $$
$$ \forall n \; \bm{count}() = n \; \Leftrightarrow \; \bm{count}().$$
The  table shows axioms characterizing   properties of other functions in the language:  

\begin{table}[h!]
	
	\caption{Axioms of recursive aggregation } 	\label{Axioms2}
	\begin{tabular}{l  l  r c|  l }
		\hline
		& & Axiom & & Commentary\\ \hline
		$\mathcal{B}_{1} $ & $\forall x \forall x_1 $ &
		 $(x_1 > x) \Rightarrow  (\bm{scale}(x_1) \geq \bm{scale}(x)) $ & & monotony \\ 
		\hline
			$\mathcal{B}_{2} $ & $\forall x \forall y $ & $\bm{plus}(x, y) = \bm{plus}(y,x)$ &
		& symmetry \\ \hline
		
			$\mathcal{B}_{3} $ & $\forall x \forall y  \forall x_1 $ 
		& $ ( x_1  > x ) \Rightarrow (\bm{plus}(x_1, y)   \geq  \bm{plus}(x, y) ) $& & monotony  \\ 
		\hline
		
			$\mathcal{B}_{4} $ & $\forall x \forall y  \forall z $ 
		&$ \bm{plus}(x, \bm{plus}(y, z) )  = \bm{plus}( \bm{plus}(x, y), z) $ & & associativity  \\ 
		\hline
		
	   $\mathcal{B}_{5} $ & $\forall i$& $\bm{agg}(1) = \bm{scale}(\bm{get}(1)) $ & $\&$ & recursive\\ 
	    & & $ ( \bm{agg}(i + 1) = \bm{plus}( \bm{agg}(i) , \bm{scale}(\bm{get}(i + 1 ))) $ &  & aggregation \\	
	    \hline
	    
	    $\mathcal{B}_{6} $ & $ \forall x \forall n \forall x_1 \forall n_1$ &
	    	            $ (x_1 > x) \Rightarrow ( \bm{norm}(x_1, n) \geq \bm{norm}(x, n) ) $ & \& & $\bm{norm} $\\
	    	           && $(n_1 > n) \Rightarrow ( \bm{norm}(x , n_1) \leq   \bm{norm}(x , n)) $ & & monotony \\
	    	           \hline
	   $\mathcal{B}_7$ & $\forall i$ & $\Big(\bm{get}(i + 1) = \bm{norm}( \bm{agg}(i), i)\Big) $&$ \Rightarrow $& idempotence \\
	   & & $\bm{norm}( \bm{agg}(i+1), i + 1) = \bm{get}(i+1)$ & &\\ \hline
	    $\mathcal{B}_{8}$ & $\forall x$ & $\bm{norm}(\bm{scale}(x), 1) = x $ && tautology \\ \hline
	    
	    $\mathcal{B}_{9}$ & $\forall x_1 \forall x_2 \forall i$ &
	    $(x_1 \neq x_2) \Rightarrow  (\bm{scale}(x_1) \neq (\bm{scale}(x_2)  $ & $\&$ & strict \\ && $\bm{norm}(x_1, i) \neq \bm{norm}(x_2, i)  $ & & monotony  \\\hline 
	    
	   $\mathcal{B}_{10} $& $ \forall x_1 \forall x_2 \forall y_1 \forall y_2$ &
	   $(x_1 < x_2) \& (y_1 < y_2) $ & $\Rightarrow$ & strict \\
	   && $\bm{plus}(x_1, y_1) < \bm{plus}(x_2, y_2)$ && monotony \\ \hline
	 
	\end{tabular}
\end{table} 

Typical examples of the function $plus$ are
\begin{itemize}
	\item $plus(x, y) = x + y$
	\item $plus(x, y) = x \cdot y$
	\item $plus(x, y) = max(x, y).$
\end{itemize}
All these functions are used by popular learners, as I will demonstrate. 

Each model $\mathcal{M}$ uniquely defines an operation   $${TOT}(\mathcal{M}) = \bm{norm}( \bm{agg}(\bm{count} ), \; \bm{count} ) $$ 

Given an interpretation of functions $\bm{scale},$ $\bm{plus}$ $\bm{agg},$ $   \bm{norm}$,   ${TOT}(\mathcal{M}) $ is defined  by its  finite domain $\bb{G}$ and the strict total order $\prec$ on it.  

The next theorem shows that  ${TOT}(\mathcal{M})$  does not depend on the order $\prec$.

	 \begin{theorem}
		Suppose  $\mathcal{M}_0, \mathcal{M}_1$ models of recursive aggregation language are different by the orders $\prec$ only:  domains of the sort $\bm{G}$ consist of the same elements and interpretations of all the functions of the language are identical.   Then $TOT(\mathcal{M}_0) = TOT(\mathcal{M}_1).$
	 \end{theorem}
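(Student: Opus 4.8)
The plan is to isolate the single place where the order $\prec$ can influence $TOT$. By definition $TOT(\mathcal{M}) = \bm{norm}(\bm{agg}(\bm{count}), \bm{count})$, so I would first account for each ingredient. The interpretations of $\bm{scale}$, $\bm{plus}$, and $\bm{norm}$ are common to $\mathcal{M}_0$ and $\mathcal{M}_1$ by hypothesis; since the two models share the same domain $\bb{G}$ they have the same cardinality, so $\bm{count}$ returns the same value $n$ in both. The genuinely order-sensitive ingredients are $\bm{get}$ — which lists the elements of $\bb{G}$ in the order $\prec$ — and, through it, $\bm{agg}$. Hence the whole theorem reduces to showing that $\bm{agg}(n)$ takes the same value in both models.

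Next I would unfold $\bm{agg}$ using the recursion $\mathcal{B}_5$: the value $\bm{agg}(n)$ is the left-nested application of $\bm{plus}$ to the sequence $\bm{scale}(\bm{get}(1)), \ldots, \bm{scale}(\bm{get}(n))$. Passing from one order to another permutes this sequence, but it leaves the underlying multiset $\{\bm{scale}(g) \mid g \in \bb{G}\}$ unchanged, because $\bb{G}$ and $\bm{scale}$ are shared between the models. So the theorem reduces further to the purely algebraic claim that the value of this iterated $\bm{plus}$ depends only on the multiset of its operands and not on the order in which they are listed.

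The heart of the argument is therefore a lemma: for a binary operation satisfying commutativity ($\mathcal{B}_2$) and associativity ($\mathcal{B}_4$), the left-fold is invariant under every permutation of its operands. I would prove this by induction on $n$, the base cases $n=1$ and $n=2$ being immediate (the latter is exactly $\mathcal{B}_2$). For the inductive step the key manoeuvre is to show that any chosen operand can be migrated to the final position without changing the fold's value: repeated re-bracketing by $\mathcal{B}_4$ together with swaps by $\mathcal{B}_2$ past the right neighbour move it stepwise to the end, after which the induction hypothesis disposes of the remaining $n-1$ operands. Since every permutation is generated by such move-to-end steps (equivalently, by adjacent transpositions), permutation-invariance follows.

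Combining these, $\bm{agg}(n)$ coincides in $\mathcal{M}_0$ and $\mathcal{M}_1$, and applying the shared $\bm{norm}$ to this common value and the common argument $n$ gives $TOT(\mathcal{M}_0) = TOT(\mathcal{M}_1)$. I expect the main obstacle to be the bookkeeping inside the permutation-invariance lemma: although $\mathcal{B}_2$ and $\mathcal{B}_4$ are precisely the right ingredients, making the move-to-end step fully rigorous really requires a preliminary generalized-associativity observation — that an iterated $\bm{plus}$ is independent of its bracketing — so that commutativity may legitimately be invoked at an interior position rather than only at the outermost one. Everything outside that lemma is routine substitution, relying only on the fact that the remaining data ($\bb{G}$, $\bm{scale}$, $\bm{plus}$, $\bm{norm}$, and the value of $\bm{count}$) are shared by the two models.
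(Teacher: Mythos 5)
Your proposal is correct and follows essentially the same route as the paper: both arguments reduce the claim to permutation-invariance of the iterated $\bm{plus}$ over the common multiset $\{\bm{scale}(g)\}$ and establish it from symmetry ($\mathcal{B}_2$) and associativity ($\mathcal{B}_4$) by decomposing the permutation into elementary moves (the paper inducts on the number of adjacent transpositions; your move-to-end induction on $n$ is, as you note, equivalent). Your explicit remark that a generalized-associativity step is needed to apply commutativity at an interior position is a fair point of rigor that the paper glosses over in its inductive step.
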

 \begin{proof}
 	Suppose, the models are different by the orders on domains $G_0, G_1$ of sort $\bm{G}.$
 	For the finite domain $G, $ the  order $\prec_1$  may be considered as a permutation of  order $\prec$ 
 	Each permutation can be obtained  by finite number of simple transpositions (transpositions of  neighboring elements). Suppose, the order $\prec_1$ is obtained from order $\prec_0$ by $K$ simple transpositions. Let us prove the theorem with  induction by $K$. First, suppose $K = 1.$
 	Denote $x_1, \ldots, x_i, x_{i+1}, \ldots, x_n$ elements of $G$ ordered by $\prec_0$. Suppose, the order $\prec_1$ transposes elements $x_i, x_{i+1}$. Denote $\bm{agg}(l), \bm{agg}_1(l)$ values of the recursive aggregation function  obtained on the step $l$ with the orders $\prec_0,  \prec_1$ respectively. Since all the elements prior to $i$ are identical in these orders, $\bm{agg}(i-1) = \bm{agg}_1(i-1).$ By definition 
 	\begin{align*} 
 	   \bm{agg}(i) & =  \bm{plus}( \; \bm{agg}(i-1), x_i \;) &   \\
 	   \bm{agg}(i+1) & = \bm{plus}( \; \bm{agg}(i), x_{i+1} \; )  \\
 	   & = \bm{plus}( \; \bm{plus}( \; \bm{agg}(i-1), \; x_i\; ), \; x_{i+1} \;)\\
 	   \bm{agg}_1(i) &=  \bm{plus}( \; \bm{agg}(i-1), \; x_{i+1}\;)  & \\
 	    \bm{agg}_1(i + 1) & = \bm{plus}( \; \bm{plus}( \; \bm{agg}(i-1), \; x_{i+1}\;), \; x_i \; ).\\
 	\end{align*}
Using symmetry and associativity of the function $\bm{plus}$ ($\mathcal{B}_{2}, \mathcal{B}_{4}$) we get
\begin{align*}
	 \bm{agg}_1(i + 1) & = \bm{plus}( \; \bm{plus}( \; \bm{agg}(i-1), \; x_{i+1}\;), \; x_i \; )\\
	                                & =  \bm{plus}( \;  x_i,  \; \bm{plus}( \; \bm{agg}(i-1), \; x_{i+1}\;) ) \\
                               & = \bm{plus}( \; \bm{plus}( x_i, \; \bm{agg}(i-1)) , \; x_{i+1}\;) \\
                                &  = \bm{plus}( \; \bm{plus}( \; \bm{agg}(i-1), \;  x_i \; ) , \; x_{i+1}\;) \\
	                                & = \bm{plus}( \; \bm{agg}(i), \;x_{i+1} \;)\\
	                                & = \bm{agg}(i+1). \\	
\end{align*}

All the elements in the orders $\prec_0, \prec_1$ after $(i+1)$-th are identical. Therefore, $\bm{agg}_1(n) = \bm{agg}(n).$ So, the constants $out(\mathcal{M}) $ and $ out(\mathcal{M}_1)$ will be identical in this case. 

Suppose, we proved the theorem for $K = k$. Let us prove it for $k+1.$ Suppose, the first $k$ simple transpositions involve elements with the indices below $i-1$, and the last simple transposition involves elements $x_i, x_{i+1}.$ Then, the same considerations apply again. 
 
 \end{proof}

Every interpretation of the functions $\{\bm{scale},$ $\bm{plus}$ $\bm{agg},$ $   \bm{norm}\}$  will have potentially infinite number of models different by the domains of the sort $\bb{G}$

The theorem means that, given interpretation of the  functions of the recursive aggregation language, the operation $TOT(\mathcal{M})$ is an aggregation operation on the domain $G$ of sort $\bb{G}:  TOT(\mathcal{M}) =  TOT(G) $. 

An aggregation defined by an interpretation of the recursive aggregation language may be called recursive aggregation. 

The next theorem shows recursive aggregation  is a proper aggregation. 

\begin{theorem}
	For any interpretation of the recursive aggregation language, the operation $TOT(G)$ is a proper aggregation. 
\end{theorem}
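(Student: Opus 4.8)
The plan is to verify the three axioms of proper aggregation — Tautology, Idempotence, and Monotony — for the operation $TOT(G) = \bm{norm}(\bm{agg}(\bm{count}), \bm{count})$, using the order-independence established in the previous theorem as the central tool. Since that theorem guarantees $TOT(\mathcal{M})$ depends only on the domain $G$ of sort $\bb{G}$ and not on the order $\prec$, I am free to pick, for each claim, whatever total order on the elements is most convenient (treating repeated values as distinguished by position, as in that theorem's proof). Tautology is immediate: if $G = \{x\}$ then $\bm{count} = 1$ and $\bm{agg}(1) = \bm{scale}(\bm{get}(1)) = \bm{scale}(x)$ by $\mathcal{B}_5$, so $TOT(G) = \bm{norm}(\bm{scale}(x), 1) = x$ by $\mathcal{B}_8$.

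For Idempotence I would exploit order-independence to place the adjoined element last. Write $n = \|G\|$ and $v = TOT(G) = \bm{norm}(\bm{agg}(n), n)$. Forming $G' = G \cup \{v\}$ and ordering it so that $v$ occupies position $n+1$ while the first $n$ positions repeat an order of $G$, the recursion $\mathcal{B}_5$ gives $\bm{agg}'(i) = \bm{agg}(i)$ for all $i \le n$; in particular $\bm{get}'(n+1) = v = \bm{norm}(\bm{agg}(n), n) = \bm{norm}(\bm{agg}'(n), n)$. This is exactly the antecedent of the idempotence axiom $\mathcal{B}_7$ with $i = n$, whence $TOT(G') = \bm{norm}(\bm{agg}'(n+1), n+1) = \bm{get}'(n+1) = v = TOT(G)$.

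Monotony is where the real work lies. Given an isomorphism $q : G_1 \to G_2$ of multisets with $q(x) \ge x$, I would again use order-independence to compute both totals in aligned orders: fix any enumeration $a_1, \ldots, a_n$ of $G_1$ and use the induced enumeration $q(a_1), \ldots, q(a_n)$ of $G_2$. I then prove $\bm{agg}_2(i) \ge \bm{agg}_1(i)$ by induction on $i$. The base case is $\bm{scale}(q(a_1)) \ge \bm{scale}(a_1)$ from the monotony of $\bm{scale}$ ($\mathcal{B}_1$). For the step I need two-argument monotony of $\bm{plus}$, which follows from symmetry and single-argument monotony ($\mathcal{B}_2, \mathcal{B}_3$): from $\bm{agg}_2(i) \ge \bm{agg}_1(i)$ and $\bm{scale}(q(a_{i+1})) \ge \bm{scale}(a_{i+1})$ I obtain $\bm{agg}_2(i+1) \ge \bm{agg}_1(i+1)$. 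At $i = n$, isotonicity of $\bm{norm}$ in its first argument ($\mathcal{B}_6$, with equal second arguments) upgrades this to $TOT(G_2) \ge TOT(G_1)$.

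The main obstacle I anticipate is the strict half of Monotony, where $q(x) > x$ must force $TOT(G_2) > TOT(G_1)$, since the non-strict axioms alone permit the aggregate to saturate (e.g. $\bm{plus} = \max$). Here I would rerun the same induction but invoke the strict-monotony axioms: $\mathcal{B}_9$ (together with $\mathcal{B}_1$) to turn $q(a_1) > a_1$ into a strict inequality of scales and to keep $\bm{norm}$ strictly increasing, and $\mathcal{B}_{10}$ to propagate strict inequalities through $\bm{plus}$ at each step, so that $\bm{agg}_2(i) > \bm{agg}_1(i)$ is preserved and finally transferred through $\bm{norm}$. The one point demanding care is the legitimacy of the aligned-order computation: I must cite the previous theorem explicitly to justify that the totals computed in the $q$-aligned orders coincide with the order-free values $TOT(G_1)$ and $TOT(G_2)$.
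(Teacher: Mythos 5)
Your proof is correct and follows essentially the same route as the paper's: Tautology via $\mathcal{B}_5$ and $\mathcal{B}_8$, Idempotence by placing the adjoined element last and invoking $\mathcal{B}_7$, and Monotony by an induction along $q$-aligned orderings using $\mathcal{B}_1$, $\mathcal{B}_2$, $\mathcal{B}_3$, $\mathcal{B}_6$, with $\mathcal{B}_9$ and $\mathcal{B}_{10}$ handling the strict case. Your version is in fact somewhat more careful than the paper's at two points the paper leaves implicit --- explicitly citing the order-independence theorem to license the aligned-order computation, and spelling out how strictness propagates through $\bm{scale}$, $\bm{plus}$, and $\bm{norm}$ --- but the underlying argument is the same.
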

\begin{proof}
	Let us prove tautology. If $G = \{x\}$, then $$TOT(G) = \bm{norm}(\bm{agg}(1), 1) = \bm{norm}(s\bm{cale}(\bm{get}(1)) ,1) = \bm{norm}(\bm{scale}(x) ,1) = x,$$ using the axiom $\mathcal{B}_{8}.$ 
	
	 Let us prove monotony. Suppose, two models with domains $G_1, G_2$ of the sort $\bb{G}$ belong to the same interpretation, and $q: G_1 \rightarrow G_2$ is isomorphism such that $q(x) \ge x$. Suppose they are ordered in such a way that $q$ maps $i$-th element of $G_1$ into $i$-th element of $G_2$. 
	 
	 Let us prove it by induction by $n = \|G_1 \| = \|G_2\|.$ For $n = 1$ it is true based on the axiom $\mathcal{B}_8.$ Suppose, the statement is proven for $n = k.$ 
	 Denote $agg_1(i), agg_2(i)$ results of aggregation on the domains $G_1, G_2$ on the step $i$, and denote corresponding elements of $G_1, G_2: \; x_j, \; y_j = q(x_j), \; j = 1, \ldots, n.$ 
	 By the assumption of induction,  
	 $\bm{norm}(\bm{scale}bm{agg}_2(k)), k) \ge \bm{norm}(\bm{scale}(\bm{agg}_1(k)), k)$ 
	 
	  Let us prove the statement  for $n = k +1.$ 
	  $$\bm{norm}(\bm{scale}(\bm{agg}_1(k+1)), k+1) = 
	\bm{norm}(\bm{scale}(\bm{plus}(agg_2(k), \bm{scale}(x_{k+1}), k+1) .$$
	  $$\bm{norm}(\bm{scale}(\bm{agg}_2(k+1)), k+1) = 
	\bm{norm}(\bm{scale}(\bm{plus}(agg_1(k), \bm{scale}(y_{k+1}), k+1) .$$
	We need to show that
	$$\bm{norm}(\bm{scale}(\bm{plus}(agg_2(k), \bm{scale}(y_{k+1}), k+1)  \ge $$
$$	\bm{norm}(\bm{scale}(\bm{plus}(agg_1(k), \bm{scale}(x_{k+1}), k+1) .$$ 
The function $\bm{norm}$ is isotone by the first argument ($\mathcal{B}_6)$. The function $\bm{scale}$ is isotone ($\mathcal{B}_1$). The function $\bm{plus}$ is isotone by the first argument ($\mathcal{B}_3$) and symmetric ($\mathcal{B}_2$), therefore it is isotone by both arguments. It follows that inequality holds.
The strict monotony follows from monotony and the axioms $\mathcal{B}_{9}, \mathcal{B} \mathcal{B}_10.$

	Denote $G_0,\;  G_1 = G_0 \cup {r}$. To prove idempotence, assume $r = TOT(G_0).$ 
	For $G_1$, $\bm{get}(n+1) = r = \bm{norm}(\bm{agg}(n), n).$
	Using axiom $\mathcal{B}_7$ we get 
	$$TOT(G_1) = \bm{norm}(\bm{agg}(i+1), i+1) = r = TOT(G_0).$$

\end{proof}

It is easy to show that each of the next  combinations of functions satisfies all the axioms $\mathcal{B}_1 - \mathcal{B}_{10}.$
\renewcommand{\arraystretch}{2}
\begin{table}[h!]
	
	\caption{Some interpretations of recursive aggregation \label{tab:aggregations}} 
	\begin{center}
	\begin{tabular}{| c |c |c| c |c| }
		\hline
& $\bm{plus}(x, y) = $ & $\bm{scale}(x) = $ & $\bm{norm}(x, i) =$  & $TOT(G) =$\\ \hline
  1 &$ x  + y$ & $x$ & $x / i $ & $ \frac{1}{n} \sum_i x_i $ \\ \hline
  2 & $ x + y$ & $ x^2 $ & $\sqrt{ x/ i } $ &$ \sqrt{\frac{1}{n}{\sum_i x_i^2} }$\\ \hline
  3 & $ max(x, y) $ & $x$ & $x$ & $max(G)$\\ \hline
 4 & $ x \cdot y $ & $x$ & $ x^{ 1/i} $ & $ ( \prod_i x_i )^ {1/n}  $\\ \hline
\end{tabular}
\bigskip
\\ $G$ is domain of the sort $\bb{G}$, $n = \|G\|.$
\end{center} 
\end{table} 
\renewcommand{\arraystretch}{1}

\begin{statement} Each  combination of functions in the table \ref{tab:aggregations}  is an  interpretations of the recursive aggregation language. 
	\end{statement} 
	\begin{proof}
		Monotonicity of all the functions is obvious. Symmetry of the function $\bm{plus}$ in all the combinations is obvious. We need to show that axioms of tautology and idempotence are true for all combinations.
		Let us show it for combinations from the lines (2) and (4), where it may be not obvious.
	 	If $G = \{x\}$, $TOT(G) = \bm{norm}( \bm{agg}(1), 1) = \bm{norm}( \bm{scale}(x), 1).$	 	 For  the combination (2) $TOT(G) = \sqrt{x^2/ 1} = x.$
	 	For the combination (4) $TOT(G) = (\prod x)^1 = x. $
	 	Let us prove idempotence. For combinations (1) and (3) it is obvious. Let us prove it for combinations (2) and (4) again.
	 	Suppose, $z = x_{n+1} = TOT(\{x_1, \ldots, x_n\}).$
	 	
	 	For combination (2)
	 	\begin{align*}
	 			z & = TOT(\{x_1, \ldots, x_{n}\}) & =  \sqrt{ \frac{ \sum_{i = 1}^n x_i^2  } {n}} \\
	 			& TOT(\{x_1, \ldots, x_{n}, z\} ) &  = \sqrt{ \frac{ \sum_{i = 1}^n x_i^2  + z^2 }{n+1}}\\
	 				& = \sqrt{  \frac{ \sum_{i = 1}^n x_i^2  + { ( \sum_{i = 1}^n x_i^2 )  } / {n} }{n+1}}
	 				& = \sqrt{ \frac{ \sum_{i = 1}^n \frac{n+1}{n} x_i^2}{n+1} }\\
	 				 & = \sqrt{ \frac{\sum_{i = 1}^n x_i^2 }{n} }
	 				& = TOT(\{x_1, \ldots, x_n\}\\
	 	\end{align*} 
 	   For combination (4)
 	   $$z = TOT(\{x_1, \ldots, x_n\}) = (\prod_i x_i)^{1/n}$$
 	   \begin{align*}
 	   TOT(\{x_1, \ldots, x_n, z\} ) & = \left(  \left(\prod x_i \right) \cdot  z \right) ^{1 / {n+1}}\\
 	   & =  \left( \prod_i x_i \cdot  \left(  \prod_i x_i \right)^{1/n}\right) ^{1/{n+1}} \\
 	   & = \left( \prod_i x_i^{(n+1)/n} \right)^{1/{n+1}} \\
 	   & =  \prod_i x_i^{1/n} = \left( \prod_i  x_i \right)^{1/n} = z.\\
	 \end{align*} 
	\end{proof}

In the next section, I show examples of how to evaluate \gry in some real life situations.

\section{Incongruity evaluation for data analysis} \label{Semantics}

The concept of \gry was developed for evaluation of predictability, to give logical foundation for learning in ML. The next examples show that the concept may be used for wide array of data analysis problems, where we evaluate various assumptions about dependencies of interest. 

\subsection{Incongruity  of scales}

 Many people have body weight  scale. I have two.  The assumption is that they measure the same weight with small errors. Disagreements  between the imprecise scales may be formalized as \gry to evaluate validity of this assumption. As in the case of ML, the estimate of \gry may be used for decision making: shall I go on with these scales, or buy a new, more reliable one. 

In this case, the underlying dependence $\ph$  is the dependence of my ``true'' weight on time. The Language of observations and hypotheses (LOH) has two modalities $\approx_1, \approx_2$ corresponding to readings from the first and the second scale respectively.   So, all the formulas of the observations have the form
$$\approx_i (\ph(x) = y),$$
where $i= 1, 2$ is the index of the scale,  $x$ is time, $y$ is  weight. 

 The collision condition shall identify pairs of measurements of two scales taken within small interval of time. For these pairs larger differences of weight mean larger errors.
The collision condition is defined by the formula:
$$\pi(\alpha_1, \alpha_2) = \big((\bm{s}(\alpha_1) = \approx_1 )\; \& \;( \bm{s}(\alpha_2) = \approx_2) \;  \& \; (\rho_x(\alpha_1, \alpha_2) \le 5min)\big).$$
with the deviation function   
\[
t(\alpha_1, \alpha_2) = 
\begin{cases}
	0, & \text{if  } \rho_y(\alpha_1, \alpha_2) < 1\\
\rho_y(\alpha_1, \alpha_2) - 1, & \text{otherwise}. 	
\end{cases}
\]

For the proper aggregation of the deviations I use 80-th percentile. If $20\%$ of deviations  are positive, the scales can not be used. I may also use maximum. If the maximal deviation is more than 1 pound, the scales are useless.

\subsection{Is there a dependence? } 

 Suppose, I want to check an assumption that my weight is a non-decreasing  function of the amount of  consumed  calories.   The goal is to discover the actionable pattern.  Again, we can use \gry to make a conclusion. 
 
 Let us assume that when the amount of calories from day to day changes less than 100 calories, it may not affect on my weight;  and the weight is  evaluated with precision 1 pound. 
 
 In this case, the underlying dependence $\ph(x)$ is a dependence of the recorded weight on the consumed calories. 

There is only one modality $\approx$ so all the formulas in the language of observations and hypotheses 
have the form $\approx (\ph(x) = y).$ 

Since the assumption is that  my weight is a monotone function of consumed calories,  the next situations violate the assumptions
\begin{itemize} 
	\item  when the weight  changes in one direction, but consumed calories change in opposite direction; 
	\item when I consume about the same amount of calories,  the but the weight changes. 
\end{itemize} 
In both cases, the larger are the differences in weight (variable $y$), the larger shall be deviations. 

Accordingly, there shall be two aspects of \gry with these collision conditions: 
\begin{align*}
\pi_1(\alpha_1, \alpha_2) = \Big( ( \bm{x}(\alpha_1) < \bm{x}(\alpha_2)) \; \& \;(  \bm{y}(\alpha_1) >  \bm{y}(\alpha_2))  \; \& \;
( \rho_x(\alpha_1, \alpha_2) > 100) \Big) 
\end{align*} 

$$ \pi_2(\alpha_1, \alpha_2) =  \big( (\bm{y}(\alpha_1) > \bm{y}(\alpha_2) ) \; \& \; (\rho_x(\alpha_1, \alpha_2)  < 100) \big) .$$ 

In both cases the  deviation function is 
\[
t(\alpha_1, \alpha_2) = 
\begin{cases}
	0, & \text{if  } \rho_y(\alpha_1, \alpha_2) < 1\\
	\rho_y(\alpha_1, \alpha_2) - 1, & \text{otherwise}. 	
\end{cases}
\]

I would use the first   combinations of functions from the table \ref{tab:aggregations} of typical interpretations of the Language of recursive aggregation to get a handle on how big are the deviations, and if the dependence is strong enough. 

I may study various independent variables and their combinations to see if some of them are associated with the weight better. If I cannot assume monotonicity of the dependence, only the second aspect of \gry will be used.  The lower is the total \gry of a hypothesis, the more likely there is  the dependence which can be used to control  weight. 

Usually, the correlation is evaluated using coefficient of correlation in statistics. Statistics answer the next question: how likely is that the correlation exists in general population? 
First of all, the concept of ``general population'' does not make sense for  my weight measurements. Then, my question is not about  existence of dependence, but if there is strong enough  dependence  to use  for  prediction of my weight changes. This is completely different question. The values of the regression coefficient depend on the sample size, which is extremely important for the statistical question, and irrelevant for prediction.

%
\bigskip
\subsection{Tracking Oswald}

Many witnesses reported seeing Lee Harvey Oswald during  the day of Kennedy assassination. The investigators may have several theories about his  movements  on this day.  Incongruity evaluation may be used to find the theory  maximally reconciled with witnesses testimonies, even though some of them may be unreliable.

The  underlying dependence  $\varphi$ reflects the true movements of Oswald. It maps times (variable $x$)  into locations (variable $y$)  with particular memorable names (such as ``the sixth floor of the Texas School Book Depository''). 

The distance  in   time (by variable $x$)  is measured in minutes.  The distance between  locations is also measured in minutes: the time required to get from one place to another.  Formulas (instances) of observations  describe   locations and times of Oswald's sightings by witnesses and have form $\approx (\ph(x) = y).$ Formulas of hypotheses (hypothetical instances) reflect the investigator's theory, and have the form $\asymp (\ph(x) = y).$

So, for two instances  (formulas of LOH) $\alpha_1, \alpha_2,$ $\rho_x(\alpha_1, \alpha_2)$ is the time which elapsed between the (hypothetical, observed) sightings, and $\rho_y(\alpha_1, \alpha_2)$ is the minimal time, required to travel between the locations $\bm{y}(\alpha_1), \bm{y}(\alpha_2)$. 
The times between sightings in any two locations shall not be less than the minimal time required to travel between these locations: $\rho_y(\alpha_1, \alpha_2) \leq \rho_x(\alpha_1, \alpha_2).$

For example, if Oswald was seen in the location $A$ in the time $t_1$ (observation $\alpha_1$), and hypothetically he was in some location $B$ in time $t_2$ (hypothetical instance $\alpha_2$) and the time to travel between  $A$ and $B$ is $z$, then for the observation and the hypothesis to be both  true it is required that $|t_1 - t_2| > z. $

 Accordingly, the \gry theory has only aspect with the collision condition: 
 $$\pi(\alpha_1, \alpha_2) = \Big( ( \bm{s}(\alpha_1) = \asymp) \;  \&  \; (\bm{s}(\alpha_2) = \approx) \Big),$$
 which simply identifies the formula $\alpha_1$  as a hypothetical instance, and the formula $\alpha_2$ as an observations.

The deviations are calculated by formula
\[
\delta(\alpha_1, \alpha_2) = 
\begin{cases}
	0, & \text{if } \rho_y(\alpha_1, \alpha_2) \le \rho_y(\alpha_1, \alpha_2)\\
	\rho_y(\alpha_1, \alpha_2) - \rho_x(\alpha_1, \alpha_2), & \text{otherwise}.
\end{cases}
\]

For a given hypothesis (investigators theory) the full model will include all the witnesses observations and hypothetical  formulas with times  of Oswald's whereabouts in all the locations mentioned by the witnesses.   

For proper aggregation of deviations, I would use the first  combination of functions from the table \ref{tab:aggregations} of typical interpretations of the language of recursive aggregation. 

The theory with the lowest  \gry  may be considered optimal.
The advantage of this approach is that there is no subjectivity in evaluation of witnesses testimonies and  theories of Oswald's movements. 

The same way, as witnesses testimonies are evaluated for incongruity with the theories, the testimonial of one witness can be compared with  testimonies of all others. Incongruity of each witness with other witnesses can be used to, may be, exclude exceptionally contradictory witnesses.

\section{Structure of learners} 

Now I concentrate on the  learners.

\subsection{The Main Conjecture} 

In practical applications,  the training set is the set of given observations $\{ \beta_i, i = 1:m\}.$  

The procedures of $k$-NN, Naive Bayes, SVM, hierarchical clustering,  for example, appear to have nothing in common - because  they  are formulated in different terms.

Here I propose a common language to describe  procedures used by learners.

\begin{tcolorbox}[title= The maint conjecture]
\begin{enumerate}
\item  Let $F$ be a class of hypothesis for a learner $\mathcal{Z}$. There exists   a loss criterion $L_{\mathcal{Z}}(h, S), \; h \in F, $ such that, given a training set $S$ and parameters $q$,  
the   learner $\mathcal{Z}$   performs

\textbf{Proper training} 
\begin{itemize}
	\item\textbf{Optional, Focusing}: transformation   $U: \; S \rightarrow  S_q$ 
	\item \textbf{Fitting}: generation of the  hypotheses $h \in F$   and evaluation of 
	$L_{\mathcal{Z}}( h, S_q) $ 
	\item  \textbf{Optimal selection}:  selecting  a hypothesis $h^\prime(q)$  with lowest  loss $L_{\mathcal{Z}}(h, S_q)$  as a decision.
\end{itemize} 
If $\mathcal{Z}$ is a wrapper-type learner, it  has the next steps performed in a loop by $i$:
\begin{itemize}
	\item \textbf{Generating  parameters} $q_i$ 
	\item \textbf{Proper training} with parameters $q_i$
	\item  \textbf{Calculating weight } $W( q_i)$ 
	\item   \textbf{Combining decisions: } ${d =  \Psi\Big(\{h^\prime(q_i),  W(q_i)\} \Big).}$
	\end{itemize}
\item There exists a \gry  theory $T$,  regularization functional $R(h)$  and a total proper aggregation procedure $\tau$ such that for a hypothesis $h$ and observations $S$
$$ L_{\mathcal{Z}}(h, S) = \tau(h, S, T, R(h)), $$
total proper incongruity of $h, S, T.$
\end{enumerate} 
\end{tcolorbox}

 On a Focusing step  transformation $U$ may be non-linear transformation of data prior to building a model.

Yet, typically, focusing is used to select observations or features or emphasize some of them  with weights. 

The ``lowest loss'' is, usually, a minimal loss among the tested hypotheses. It may coincide with the lowest loss on $F$ or not. 

The procedures may use sequence control operators: loops, breaks and so on. 

The main conjecture  answers  the first fundamental question:  \textbf{What shall be done with the training set for learning? }The main conjecture defines the steps used by each learner,  describes a loss criterion as \gry.

\section{Popular learners support the  Main Conjecture} 

\subsection{ERM-type learners} 

Denote $\beta_i = \big( \approx (\ph(x_i) = y_i) \big) $ i-th observation  in the training set $S$. 

In this case, class of functions is not specified and the procedure is not described. All we have is a loss criterion

$$L(h, S) = \frac{1}{m} \sum_i |h( \bm{x}(\beta_i)) - \bm{y}(\beta_i)   | .$$

Let us demonstrate that the loss criterion is the incongruity of the hypothesis $h$, training set and the \textbf{Point-Wise predictability theory}, $T_{pw}$ (see ( \ref{pointwise})).

For any hypothesis $h$, denote  $\Upsilon$ all the formulas of  the full model $\mathcal{M}(h, S, T_{pw}).$  By the definition of the full model for the theory $T_{pw}$,  $ S \subseteq \Upsilon, $  and $\Upsilon$  contains  the hypothetical instances of the hypothesis $h$ defined in the same data  points $x_i$  as observations. 

So, for each pair of formulas $\alpha_1, \alpha_2$ from $\Upsilon$ satisfying the collision condition $\pi_{pw}$, the deviation 
is $$\delta(\alpha_i, \alpha_2) = |\bm{y}(\alpha_1) - \bm{y}(\alpha_2)| = | h( \bm{x}(\alpha_2)) - \bm{y}(\alpha_2)|,$$
where $\alpha_2 \in S.$

Total proper aggregation here is proper aggregation for the only aspect of  \gryp If we use the proper aggregation operation  defined in the first line of the table \ref{tab:aggregations},  then

$L(h, S) = \sum_{\beta \in S} | h(\bm{x}( \beta) - \bm{y}(\beta) |= \gamma(h, S, T_{pw}).$

Thus, the loss criterion empiric risk is a total proper \gry, and it supports the main conjecture. 

\subsection{Linkage-based clustering}

The learner is also popularly known as hierarchical clustering. 

Intuitively, clustering is a learning problem, because it is about  modeling of a predictable  dependence: close data points shall belong to the same cluster. 

 In \cite{Shalev}, a general concept of linkage-based clustering is introduced this way:
\begin{quotation}
	These algorithms proceed in a sequence of rounds. They start from trivial clustering that has each data point in a single-point cluster. Then, repeatedly, these algorithms merge ``closest'' clusters of the previous clustering. $\langle \ldots \rangle$ Input to a clustering algorithm is between-point distance, $d.$ There are many ways of extending $d$ to a measure of distance  between domain subsets (or clusters. The most common ways are.
	\begin{enumerate}
		\item Single Linkage clustering, in which the between-clusters distance is defined by the minimum distance between members of the two clusters $\langle \ldots \rangle$
		\item Average Linkage clustering, in which the distance between two clusters is defined to be average distance between a point in one of the clusters and a point in another $ \langle \ldots \rangle$
		\item Max Linkage clustering, in which the distance between two clusters is defined as maximum distance between their elements $ \langle  \ldots\rangle $
		\end{enumerate} 
\end{quotation}

The last option clearly contradicts declared goal ``merge `closest'  clusters''. But I will consider it too.

Close observations shall belong to the same cluster. The opposite is also true:
observations of the same cluster shall be some-what close to each other. The last dependence is used for clustering. So, we consider cluster number as an independent variable $x$, and the observation vector as dependent variable, $y$.

The training set is a sequence of formulas 
$$\{ \approx (\ph( c_i) = y_i), i = 1:m \}, $$
where $c_i$ is a cluster if $i$-th observation, and $y_i$ is the observed vector of the same observation.  

Denote 
$$C_i = \{y \; | \;   \exists \alpha  (\alpha \in S)  \; \& \;  (y = \bm{y} (\alpha) ) \; \& \;  (i = \bm{x}(\alpha))  \}.$$ the set of elements of the $i$-th cluster. 

The clustering consists of repeated rounds: two ``closest'' clusters are found, combined, and the procedure repeats until there is only one cluster left.  Proper learning  happens when we search for the ``closest'' clusters. On this step,   for the each cluster $i,$  we check each of the clusters $j > i$  and evaluate their ``distance''; then the two ``closest'' clusters are selected for combining.

When we evaluate the ``distance'' between clusters $A, B,$ it is convenient to see one cluster (say, $A)$ as a target, and another $(B)$ as a candidate to combine with the target. In other words, we evaluate  the hypothesis that elements of the cluster  $B$ are a ``good fit'' to belong to   $A.$   Thus, the notation $h^{ij}$ will indicate a hypothesis that elements of the cluster $j$ are a ``good fit'' for the cluster $i.$

Let  $H_k = \{  h^{ji}\; | \; i < j \leq k \}$ denote  the class of all the hypotheses for the case, when there are  $k$ clusters.

The instances of the hypothesis  $h^{j, i}$ have the form  
$ H^{ji} =  \{\asymp (\ph(C_i ) = y)\;  | \; y \in C_j \}.$  
For a hypothetical instance $\alpha \in H^{ji}$ and the observation $\beta \in C_i$ their distance is $d(\alpha, \beta) = \rho_y(\alpha_, \beta),$ distance between data points $\bm{y}(\alpha), \bm{y}(\beta)$  of these two formulas.

The mis-fit between clusters  $i,j$ defined in the textbook \cite{Shalev}  may be evaluated as $\gamma(G(i,j)), $ where $\gamma$ is some aggregation operation (minimum, average or maximum), and $G(i,j)$  is the set of pairwise  distances for elements of $C_i, C_j.$  The learning procedure is searching for a hypothesis $h^{i,j} \in H_k$ with the lowest  mis-fit criterion $\gamma(G(i,j)).$ 

Thus, the function $\gamma(G(i,j))$ can be considered a loss criterion of the learner. 

The learning procedure may be described this way: 
\begin{tcolorbox}[title = Hierarchical Custering]
\begin{itemize} 
	\item \textbf{Loop }by all  $i, j: \; i < j  \le k$
	\begin{itemize}
	\item \textbf{Fitting}: For the hypothesis  $h^{j, i} $ evaluate  the loss criterion $\gamma(G(i,j))$ 
	\end{itemize} 
	\item \textbf{Optimal selection}: Select a hypothesis $h^{i, j} \in H_k$ with the minimal  value of the loss criterion $\gamma(G(i, j))$ 
\end{itemize}
\end{tcolorbox}

The steps of this procedure are as  described in the main conjecture. To see that the learner agrees with the main conjecture completely, we  just need to show that for some \gry theory, the loss criterion $\gamma(G(i,j))$ is a total proper \gry for any aggregating procedure $\gamma$ mentioned in the book. 

In this case the \gry theory is the point-wise theory $T_{pw}$ again. 

For two formulas $\alpha_1, \alpha_2$ such that $\;  \vdash \pi_{pw}(\alpha_1, \alpha_2),\;$   the deviation is 
$\delta(\alpha_1, \alpha_2) = \rho_y(\alpha_1, \alpha_2).$

Every aggregation operation $\gamma$, mentioned in \cite{Shalev}, satisfies axioms of the proper aggregation. Therefore, in every case,  the loss criterion 
 $\gamma(G(i,j) ) =  \gamma(h, S, T_{pw})$ is the proper total \gryp

This  proves that linkage-based clustering agrees with the main conjecture.  It would agree with the main conjecture not only for the aggregation operations mentioned in the book, but also for any other proper aggregation operation. 

\subsection{k-NN}

 The observations  have binary feedback in $Y = \{0, 1\}$.  Given a new data point $x \in \chi$, the goal is to output prediction $f(x)$  of the underlying dependence $\varphi : \chi \rightarrow Y.$ Thus, feedback is defined in one point $x$ each time. 

The procedure can be described in these steps. 

\begin{tcolorbox}[title = $k$-NN]
	\begin{itemize}
	\item \textbf{Focusing:} selecting focus training set $Q(x)$ of $k$ observations with data points closest to $x.$

  \item  \textbf{Fitting:} evaluate error rate of each of the constant functions $0,1$ on $Q(x).$

	\item\textbf{ Optimal selection: } Selection of the constant function with minimal error rate. 
\end{itemize}
\end{tcolorbox}

The learner minimizes error rate, which is the same as 
 empiric risk $L(f, S)$ on functions $f \in \{0, 1\} $ defined on focus training sample $Q(x)$.
We have already demonstrated that empiric risk is total proper \gry for the point-wise \gry theory. 

Thus, $k$-NN supports the main conjecture. 

\subsection{Two k-NN learners with adaptive choice of $k$ } 
 
The parameter $k$ defines the size of the focus training sample. Optimally, for most of data points  $\xi \in \chi, $ the neighborhood $Q_k(\xi)$  shall be small enough to have  majority of the points of the same class as the point $\xi$ due to the ''predictability''  of the underlying dependence, and large enough of that random outliers did not confuse us. 

Here I discuss two approaches to select $k$ for every new data point. The first is described in \cite{kNN}, the second is my new algorithm. Both learners find prevalent class $y$ in the focus sample, calculate its frequency  $p_k(y)$  and the error rate ${r_k(y) = 1 - p_k(y)}$ the same as $k$-NN.

The work \cite{kNN} proposes, given a data point $x,$ start with a small $k$ and  gradually increase it  while calculating bias ${t_k(y) = p_k(y) - 0.5}$  of the prevalent class  with every $k$. The procedure stops when  the bias  reaches certain  threshold. If the threshold was not ever reached, they don't output any answer. 

The threshold  they propose to use is: 

$$ 
\Delta(n, k, \delta, c_1 ) = c_1 \sqrt {\frac{   log(n) + log( \frac{1}{\delta})  }{k }  },
$$
where $n$ is size of the training sample, $\delta$ and $c_1$ are some user-selected  parameters. 
The learner uses the same criterion as $k$-NN.

The procedure can be described like this: 

\begin{tcolorbox}[title= Ada k-NN]
	\begin{itemize}
		\item Loop by $k$  for   $k_0 \le k \le n$ 
		\begin{itemize}
		\item \textbf{Proper training:}
		\begin{itemize} 
			\item \textbf{Focusing:} Select focus training set $Q_k(x)$ of $k$ observations with data points closest to $x.$
			\item \textbf{Fitting: } Evaluate error rate $r_k(c)$  of  two constant functions $c \in \{0, 1\}$ in $Q_k(x)$ 
			\item\textbf{Optimal selection: } Select constant $c^\prime \in \{0,1\}$  with minimal error rate $r_k(c^\prime)$. 
			\end{itemize}
		  \item \textbf{Break the loop by $k$:} If $r_k(c^\prime) > \Delta(S,k,\delta, c_1)$ stop. 
		  \end{itemize} 
		\item \textbf{Combining decisions: } If $k < n$, output $c^\prime$ as decision. Otherwise, refuse to output the decision. 
	\end{itemize}
\end{tcolorbox}

Thus, the learner uses the same steps as described in the main conjecture, and it uses the same loss criterion, error rate, as original $k$-NN. Therefore, the learner agrees with the main conjecture.

This learner is developed within the statistical learning paradigm, where the training set is expected to grow to infinity fast. As $n$ increases, so does the threshold $\Delta(n, k, \delta, c_1 )$. Therefore, the selected  value $k, $  the size of the focus training set,  will go to infinity with $n.$ And thus, by the law of (very) large numbers, the solution will converge asymptotically to the expectation of the class in the given neighborhood. At the same time, the ratio of $k$ to $n$ is expected to decrease, thus the size the $k$-neighborhood will tend to 0. If the distribution is continuous in $x,$ then the leaner will likely find the solution as $n$ tends to infinity. 

The issue here is that $n$ is not going to infinity or anywhere.  For a fixed $n,$ the learner favors smaller $k$, where the evaluation of prevalent class is subject to random fluctuations caused by small sample.

To alleviate this issue, I propose an  alternative approach which uses Hoeffding inequality \cite{Shalev} to select $k$.   

The Hoeffding inequality can be written as
\begin{equation}\label{Hoef}
P[\; | \, p - E\,   | > t\;  ]  \leq 2 \;exp( - 2 k \, t^2) , 
\end{equation}
where $p$ is observed frequency of an event,  $E$ is  the expected frequency (probability) of the same event, and $t$ is an arbitrary threshold, and $k$ is the sample size. 

Suppose, $p$ evaluates observed frequency of class 1 (rate of the class 1 among the neighbors), $E$ is the probability of the class 1 in the neighborhood of a given point.  
 If $p$ is above 0.5, then observations of the class 1 prevail,  and  we pick hypothesis 1 out of two.  Otherwise, the we pick hypothesis 0. 

Let $t = | \,0.5 -p \,|. $ If $ | \, p - E\,   | > t$  the  expected prevalent class is different from the observed prevalent class.  If it is the case,  we selected the wrong hypothesis. In this case, the right side of the inequality gives us an upper limit of probability  that we picked the prevalent class wrong. 

 For selection of $k$ we use the weight, calculated as the right part of  (\ref{Hoef}) : 
$$W(y, S, k)  = 2 \cdot exp( - 2 \; k \; |\,p - 0.5\,| ^2 ). $$
Obviously, the larger is $k$, and the further is the frequency $p$ from $0.5$, the lower is the weight. The weight will serve well for the selection of the parameters $k$, because we  need to find the neighborhood where $p$ is far from uncertainty, $0.5$, yet, the size of the neighborhood is not too small. 

Here is the description of the learner's procedure for the given data point $x$.

\begin{tcolorbox}[title= Hoeffding k-NN]
	\begin{itemize}
		\item  Loop  by $k$ from $k_0$ to $n-1$ 
		\begin{itemize}
			\item \textbf{Proper training:}
			\begin{itemize} 
				\item \textbf{Focusing:} Select focus training set $Q_k(x)$ of $k$ observations with data points closest to $x.$
				\item \textbf{Fitting: } Evaluate error rate $r_k(c)$  of  hypotheses $c \in \{0, 1\}$ in $Q_k(x)$ 
				\item\textbf{Optimal selection: } Select the hypothesis $c^\prime(k)$  with minimal error rate $r_k(c^\prime(k))$. 
			\end{itemize}
			\item \textbf{Calculate wight} $W(x, S, k).$ 
		\end{itemize} 
		\item \textbf{Wrapper decision: } Select $k^\prime$ and the decision $c^\prime (k^\prime) $ with minimal weight $k^\prime = \arg \min W(x, S, k).$ 
	\end{itemize}
\end{tcolorbox}

The proper learning procedure in both $k$-NN wrappers  minimizes empiric risk, the same as original $k$-NN, and this criterion is demonstrated to be a \gry for the $T_{pw}$ \gry theory. Thus, this modification of $k$-NN also corroborates the main conjecture. 

\subsection{Decision trees}

For this learner, the features  are expected to be ``ordinal'': every  feature has finite number of ordered  values; there are no operations on feature values.  The feedback of observations is binary.

 The learner starts with whole domain, split it in two subdomains by a value of some feature. Then, the procedure is repeated for every of the subdomains  until a subdomain called "leaf" is reached. The decision is selected for this subdomain. The navigation over the tree of subdomains continues until some stopping criterion is reached. 
 The algorithm has a precise rule for generating the parameters of the next subdomain based on the previous trajectory and the obtained results.

 There are two criteria of  a leaf:
 \begin{enumerate}
 	\item Number of observations  in the subdomain is below a threshold $N$.
 	\item Percentage of observations  of the prevalent class in the subdomain is above the threshold $q.$
 \end{enumerate}

The procedure may be described as a wrapper algorithm:
\begin{tcolorbox}[title=Decision Tree: ]
\begin{itemize}
	\item \textbf{Generating parameters} $g$ of the next subdomain

		\begin{itemize}
			\item \textbf{Proper training:}
			\begin{itemize}
		\item \textbf{Focusing:} select subdomain $G(g)$ with parameters $g$
		\item \textbf{Fitting}: evaluate error rate of constant hypotheses $ \{0, \; 1\}$ in $G(g)$ 
		\item \textbf{Optimal selection}: if the leaf criteria  in $G(g)$ are satisfied, select the hypothesis with minimal error rate. 
	\end{itemize} 	
\end{itemize} 
 \item \textbf{End of loop} by parameters $g$		
\end{itemize}

\end{tcolorbox}

 In this case, we do not need to analyze whole tree before we create a wrapper \de: the decision is prevalent value on each leaf.  For the points, which do not belong to any leaf, the decision is not defined. 
 
 The error of a constant function in a subdomain  $Q$ is defined as empiric risk. And we demonstrated that empiric risk is total proper \gry of the point-wise \gry theory $T_{pw}$. 
Therefore,  this description of the procedure agrees with the main conjecture.

\subsection{Naive Bayes}

The algorithm works as if it deals with  nominal  data: the only relationship between data points is equivalence of feature values. The feedback of observations is binary, and so is feedback of the solution.

The procedure defines decision function on one data point at the time. 
For a given data point $z= \langle z_1, \ldots, z_n \rangle$ the procedure selects $n$ subsets of the training set.  Subset $S_j$ includes all the observations with $j$-th coordinate of the data point equal $z_j.$ For each subset $S_j$, the learner evaluates error rate $e_{j, c}$    of each hypothesis $c \in \{0, 1\}.$ Then for each hypothesis   it calculates loss 
$$\Delta(c, S)  =  1 - \prod_j ( 1- e_{j, c}).$$ The learner  selects a hypothesis  with the minimal loss as a decision. 

Let us define the \gry theory $T_{nb}$ for this learner.
LOH Language of the theory has an additional finite sort $\bb{N}$ with values $1, 2, ..., n$, and variables $i, j, i_1, \ldots, $ where $n$ is the dimensionality of the metric space for the sort $\bb{X}$. The language has an additional first order function $\bb{X} \times \bb{N} \rightarrow \bb{R}$, denoted $(x)_i, $ the $i$ coordinate of a vector $x \in \bb{R}.$

The theory $T_{nb}$  has $n$  aspects. For aspect $i \in  1:n$ the collision condition is 
$$\pi_i(\alpha_1, \alpha_2) =  (\bm{s}(\alpha_1) = \asymp) \; \& \;  (\bm{s}(\alpha_2) = \approx) \; \& \;  \Big( \big( \bm{x}(\alpha_1) \big)_i =  \big(  \bm{x}(\alpha_2) \big)_i \Big) . $$

The deviation function is the same for every aspect
$$\delta_i(\alpha_1, \alpha_2) = |\bm{y}(\alpha_1) -  \bm{y}( \alpha_2)| .$$

To properly aggregate all  deviations of an aspect of \gry  we use error rate: 
$$e_{j,c} = \frac{1}{k_j} \sum \delta_j\big(\alpha_1,  \alpha_2 \big), $$
where $k_j$ is the number of deviations for the $j$-th aspect in the full model. 

To combine aspect incongruities $\{e_{j,c}\}$ into total proper \grym we use the aggregation operation 
$$\Delta(c, S)  =  1 - \prod_i ( 1- e_{j,c}).$$ The function is isotone. 

We can conclude that the function $\Delta(c, S) $  satisfies the requirements on total proper aggregation.  At the same time it is  the loss criterion of this learner used to select the hypothesis with minimal value of this criterion.

Now the procedure of the learner may be described as very simple

\begin{tcolorbox}[title = Naive Bayes]
	\begin{itemize}
	\item \textbf{Fitting}: generating hypotheses $c \in \{0, 1\}$ and calculating the loss criterion $\Delta(c, S):$ 
	\begin{itemize}
		\item Loop by aspects $ i \in 1:n$
		\begin{itemize}
			\item Calculating error rate $e_{i,c}$ for the aspect $i$
		\end{itemize}
	\item Calculating the criterion $\Delta(c, S)$
		
	\end{itemize}
	\item \textbf{Optimal Selection}: selection of the hypothesis $c$  with the lowest criterion $\Delta(c, S). $ 
	\end{itemize} 
\end{tcolorbox}

This proves that Naive Bayes supports the main conjecture.

A product in the aggregation of the aspects in the loss function  is chosen  in Naive Bayes   because  it is  sensitive to the low frequencies of class: if some  value $1 - e_{i, c} $ is  close to 0, the product will be affected much more than the sum of the frequencies, for example. If some feature value almost never happens in a given class $c$, the hypothesis $c$ will have no chance of being selected, regardless of other feature values of $z$.  It justifies choice of product for aggregation.

The products of frequencies  are  traditionally interpreted as evaluation of posterior probabilities with ``naive''  assumption that the feature values are independent. There are several issues with this narrative. The first is its uniqueness. Only this learner is based on  Bayes rule. Other learners would need different foundations.   
Another issue is that  it creates an impression that the learner needs  an improvement,  is not sophisticated enough. It means, the narrative does not, really, explain or justify this learner.

I hope, I demonstrated that interpretation of the learner as   ``naive'' and ``Bayesian'' misses the point. The procedure   is driven by its specific data type, not by naive fondness for  Bayes theorem.  

\subsection{Logistic Regression}

This learner assumes the features are continuous, the feedback of the observations is binary, but the feedback of the decision is continuous. The decision is defined on the domain  $\chi. $ The procedure of generating the hypotheses is not specified. 

The class of functions associated with logistic regression is 
$$F =\left \{ \frac{ 1} {1 +  exp(- \langle w, x\rangle)  } \right \}.$$
The functions have values in the interval $(0, 1).$

The learner minimizes criterion

$$\Delta(f, S) = \frac{1}{m} \sum_{s \in S} \log\Big( |y(s) - f\big(x(s) \big)|  \Big).$$

Let us define the \gry theory $T_{lr}$ for this learner.
There is one collision condition which coincides with the condition $\pi_{pw}$ of $T_{pw}.$
$$\pi(\alpha_1, \alpha_2) = (\bm{s}(\alpha_1) = \asymp) \; \& \; ( \bm{s}(\alpha_1) = \approx)
\; \& \; (\bm{x}(\alpha_1) = \bm{x}(\alpha_2)).$$

The deviation function is
$$\delta(\alpha_1, \alpha_2) = log( \rho_y(\alpha_1, \alpha_2)).$$
The  aggregation  uses recursive aggregation functions from  the line 1 of the table (\ref{tab:aggregations}). 
Thus the loss criterion $\Delta(f, S)$ coincides with total proper  \gry  for the training set $S,$ hypothesis $f$ and the \gry theory $T_{lr}.$

So, the logistic regression supports the main conjecture as well.

\subsection{Linear SVM for classification}

All the previous learners belong to  machine learning ``folklore''. Their authors are not known, or, at least,  not  famous. 

SVM is one of the first learners associated with a known author: it  is  invented by  V. Vapnik. His earliest English publications  on this subject appeared in early nineties \cite{Vapnik1}, \cite{Vapnik2}.

Let us start with linear SVM for binary classification. 
The observations $$S = \{\beta_i, i= 1:m\}$$  have two class labels: $\{-1, 1\}$  with data points $x \in \bb{R}^n. $

The class of hypotheses $F$ consists of  linear functions $f(x) $ with $n$ variables.  For a $f \in F,  f(x) = x^T \beta  + \beta_0.$ denote $\bm{w}(f) = \beta, \bm{b}(f) = \beta_0.$

The problem is formulated as minimization of the criterion
\begin{tcolorbox}[title = Linear SVM]
\begin{align} \label{SVM}
	L(f, S, \xi)  =   \alpha \, \| \bm{w}(f)  \|^2 + \frac{1}{m} \sum_{\beta \in S }^m \xi(\beta) \\
 \text{s.t. }  \text{ for all }  \beta \in S,  \; \; \bm{y}(\beta) \cdot f(\bm{x}(\beta)) \ge 1 - \xi(\beta)\; \text{ and } \; \xi(\beta)  \ge 0. \label{conditions} 
\end{align}
\end{tcolorbox}

The criterion may be simplified though. For this, we want to switch to narrower class of functions, which shall contain all the same decisions. 

The observations $\beta \in S$ satisfying condition $\bm{y}(\beta) \cdot f(\bm{x}(\beta)) > 0. $  are considered correctly classified by the function $f$. Denote $S^\oplus(f)$ all correctly classified observations by the function $f, $ and $S^\ominus(f) = S \setminus S^\oplus(f)$ the rest of the observations. 

Let us consider all the functions $f \in F$ such that 
$ S^\oplus(f) \neq \emptyset $ and 
$$\min_{S^\oplus(f) }| f(\bm{x}(\beta))| = 1.$$
Denote this class of function $F^\prime(S).$  The class $F^\prime(S)$ is not empty. Indeed, if for some $f, f  \not \equiv 0,$ $S^\oplus(f) = \emptyset$, then, $S^\oplus( -f) = S$.  If  $$q = \min_{S^\oplus(f) }| f(\bm{x}(\beta))| \neq  1, $$ then 
the function $f^\prime = \frac{1}{q} f$ satisfies the condition $$\min_{S^\oplus(f) }| f^\prime(\bm{x}(\beta))| = 1.$$

The last consideration implies that if $f$ is the decision of the problem, 
then the problem has a decision  $f^\prime$ in the class $F^\prime(S)$ with the same set of correctly recognized observations $S^\oplus(f^\prime) = S^\oplus(f).$

Therefore, we can restrict the search for a decision in the class $F^\prime(S)$ only.

\begin{theorem}
	The linear  SVM classification problem minimizes the loss criterion
	$$L_{svm}(f, S) = \alpha \| \bm{w}(f) \|^2 + \frac{1}{m} \sum_{\beta \in S^\ominus(f)} |\bm{y}(\beta) - 	f(\bm{x}(\beta)) |,$$
for $f \in F^\prime(S). $
\end{theorem}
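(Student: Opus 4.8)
The plan is to reduce the constrained soft-margin problem \eqref{SVM}--\eqref{conditions} to an unconstrained hinge-loss minimization, and then to show that on the normalized class $F^\prime(S)$ this hinge loss coincides term-by-term with the sum defining $L_{svm}$. First I would eliminate the slack variables: for a fixed $f$, the objective $\frac{1}{m}\sum_\beta \xi(\beta)$ is minimized, subject to $\xi(\beta) \ge 1 - \bm{y}(\beta) f(\bm{x}(\beta))$ and $\xi(\beta) \ge 0$, by the pointwise choice $\xi(\beta) = \max\big(0,\, 1 - \bm{y}(\beta) f(\bm{x}(\beta))\big)$. Hence problem \eqref{SVM} is equivalent to minimizing $\alpha\|\bm{w}(f)\|^2 + \frac{1}{m}\sum_{\beta \in S}\max\big(0,\,1 - \bm{y}(\beta) f(\bm{x}(\beta))\big)$, and by the discussion preceding the theorem the search may be restricted to $f \in F^\prime(S)$.

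The core of the argument is a case split on each observation, using $\bm{y}(\beta) \in \{-1, 1\}$. For a correctly classified $\beta \in S^\oplus(f)$ the product $\bm{y}(\beta) f(\bm{x}(\beta))$ is positive, so it equals $|f(\bm{x}(\beta))|$; since $f \in F^\prime(S)$ forces $|f(\bm{x}(\beta))| \ge \min_{S^\oplus(f)}|f(\bm{x}(\beta))| = 1$, the margin is at least $1$ and the hinge term vanishes. For a misclassified $\beta \in S^\ominus(f)$ the product is $\le 0$, so $1 - \bm{y}(\beta) f(\bm{x}(\beta)) \ge 1 > 0$ and the $\max$ is attained by its second argument. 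Multiplying inside the absolute value by $\bm{y}(\beta)$, whose modulus is $1$, gives $|\bm{y}(\beta) - f(\bm{x}(\beta))| = |\bm{y}(\beta)|\cdot|1 - \bm{y}(\beta) f(\bm{x}(\beta))| = 1 - \bm{y}(\beta) f(\bm{x}(\beta))$, so the hinge term equals exactly $|\bm{y}(\beta) - f(\bm{x}(\beta))|$.

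Summing these two cases, the hinge sum over all of $S$ collapses to $\sum_{\beta \in S^\ominus(f)} |\bm{y}(\beta) - f(\bm{x}(\beta))|$, and adding back the unchanged regularizer $\alpha\|\bm{w}(f)\|^2$ yields the stated criterion $L_{svm}(f, S)$ on $F^\prime(S)$. Since the two objectives agree everywhere on $F^\prime(S)$, they share the same minimizers, which proves the theorem.

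I expect the main obstacle to be the correctly-classified case rather than the misclassified one: the vanishing of the hinge term there is precisely where the normalization defining $F^\prime(S)$ is indispensable, and it is worth checking carefully that restricting from $F$ to $F^\prime(S)$ discards no optimal decision (this rests on the rescaling argument already given before the theorem). One should also confirm that the boundary case $\bm{y}(\beta) f(\bm{x}(\beta)) = 0$ lands correctly: it belongs to $S^\ominus(f)$ by the definition $S^\oplus(f) = \{\beta : \bm{y}(\beta) f(\bm{x}(\beta)) > 0\}$, and there both the hinge term and $|\bm{y}(\beta) - f(\bm{x}(\beta))|$ equal $1$, so the identity remains consistent.
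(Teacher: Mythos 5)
Your proposal is correct and follows essentially the same route as the paper's proof: eliminate the slack variables pointwise to obtain the hinge loss, split on $S^\oplus(f)$ versus $S^\ominus(f)$ using the normalization $\min_{S^\oplus(f)}|f(\bm{x}(\beta))|=1$ to kill the correctly-classified terms, and identify the remaining hinge terms with $|\bm{y}(\beta)-f(\bm{x}(\beta))|$. The only cosmetic difference is that you establish that last identity by factoring out $\bm{y}(\beta)$ with $|\bm{y}(\beta)|=1$, whereas the paper case-splits on $\bm{y}(\beta)=\pm 1$; your handling of the boundary case $f(\bm{x}(\beta))=0$ is a welcome extra check that the paper glosses over.
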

\begin{proof}

 The conditions (\ref{conditions})  can be rewritten as $\forall \beta, \beta \in S:$
 \begin{equation}\label{cond}
 	\begin{cases}
 		\xi(\beta) \geq  1  - \bm{y}(\beta) \cdot f(\bm{x}(\beta))   \\
 		\xi(\beta) \geq 0.
 	\end{cases}
 \end{equation}
 or  
 $$\xi(\beta) \ge \max \big\{  1 - \bm{y}(\beta) \cdot f(\bm{x}(\beta)) , \; 0 \big\}.$$

  The values $\xi(\beta), \beta \in S$  do not depend on each other, so the minimum of their sum  is achieved when every variable $\xi(\beta)$ equals its lowest possible value. Let us find these lowest values for $\xi(\beta)$ depending on  if $\beta \in  S^\oplus(f)$ or  $\beta \in S^\ominus(f).$
 
 If $\beta \in S^\oplus(f),$ 
 $$\bm{y}(\beta) \cdot f(\bm{x}(\beta)) = |f(x(s))|. $$ 
 
 By definition of $F^\prime(S), $ $|f(x(s))| \geq 1.$
 Then $$\xi(\beta) \geq \max \big\{  1 - \bm{y}(\beta)\cdot f(\bm{x}(\beta)) , \; 0 \big\} = 0.$$ 
 In this case, the lowest possible value for $\xi(\beta)$ is 0.

If $\beta \in S^\ominus (f),$ $$\bm{y}(\beta) \cdot f(\bm{x}(\beta)) = - |f(\bm{x}(\beta))|.$$ Then 
$$\xi(\beta) \geq \max \big\{  1 - \bm{y}(\beta)\cdot f(\bm{x}(\beta)) , \; 0 \big\} = 1 + |f(\bm{x}(\beta))|.$$
In this case, the lowest possible value for $\xi(\beta)$ is $1 + |f(\bm{x}(\beta))|.$

So,

\begin{equation}
\min_\xi \frac{1}{m} \sum_S \xi(\beta) = \sum_{\beta \in S^\ominus(f)} ( 1 + |f(\bm{x}(\beta))| ).
\end{equation}

We still need to prove that for $\beta \in S^\ominus(f)$
$$1 + |f(\bm{x}(\beta))|  =  |\bm{y}(\beta) - f(\bm{x}(\beta))|. $$

Let us take $\beta \in S^\ominus(f).$ 
If $\bm{y}(\beta) = 1, $ then $f(\bm{x}(\beta)) < 0$ and $|f(\bm{x}(\beta))| = - f(\bm{x}(\beta)).$ So, 
$$(1 + |f(\bm{x}(\beta))| ) = 1 - f(\bm{x}(\beta)) = | \bm{y}(\beta) - f(\bm{x}(\beta)) |.$$
If $\bm{y}(\beta) = -1, $ then $f(\bm{x}(\beta)) > 0$ and $|f(\bm{x}(\beta))| = f(\bm{x}(\beta)).$ So,
$$1 + |f(\bm{x}(\beta))| = 1 + f(\bm{x}(\beta)) = -\bm{y}(\beta) + f(\bm{x}(\beta)) = |\bm{y}(\beta) - f(\bm{x}(\beta))|.$$

\end{proof}

The part $\|w(f)\|^2$ of the criterion  is a regularization component: $w(f)$ is the gradient of the hypothesis $f$, and  $\|w(f)\|^2$ is the square of its norm. Minimizing this component, we reduce the speed of the hypothesis change  and make the model more ``predictable''. 

Now  to prove that the learner agrees with the main conjecture,  I just need to define the \gry theory which explains the second component of the loss criterion 
$$L(f, S) = \frac{1}{m} \sum_{\beta \in S^\ominus} |\bm{y}(\beta) - f( \bm{x}(\beta))\|.$$

For this, we need to define how the distance is measured between the feedback of observations and the function value.

The rule is: for $\alpha_1 = (\asymp(\ph(x_1) = y_1) ), \alpha_2 = (\approx( \ph(x_2) = y_2))$
\[\rho_y(\alpha_1, \alpha_2) = 
\begin{cases}
	0, & \text{ if } y_1 \cdot y_2 \ge 0\\
	|y_2 - y_1|, & \text{otherwise}. 
\end{cases}
\]

Then the  \gry theory $T_{svm}$  coincides with point-wise \gry theory $T_{pw}$. The total proper \gry  is constructed using  proper recursive aggregation defined in the first line of the table \ref{tab:aggregations}.

\subsection{Linear Support vector regression}

The learner minimizes criterion \cite{Hastie}

$$L_{svr}(f, S) = \sum_{i = 1}^m V_\epsilon\big(\bm{y}(\beta_i) - f(\bm{x}(\beta_i))\big) + \lambda \|\bm{w}(f)\|^2, $$
where 
\[
V_\epsilon(r) = 
\begin{cases}
	0, & \text{if  } |r| < \epsilon\\
	|r| - \epsilon, & \text{otherwise}
\end{cases}
\]
and $S = \{\beta_1, \ldots, \beta_m\}.$

The second component of the loss criterion is regularization, the same as in the SVM.

The distance between feedback of an observation and the value of a hypothesis is defined through the function $V:$
for $\alpha_1 = (\asymp(\ph(x_1) = y_1) ), \alpha_2 = (\approx( \ph(x_2) = y_2))$
$$\rho_y(\alpha_1, \alpha_2) = V(y_1  -  y_2).$$

Then the \gry theory $T_{svr}$ coincides with the point wise theory $T_{pw}.$
The total proper aggregation is defined again as the first line in the table \ref{tab:aggregations}. 

So, the linear support vector regression supports the main conjecture as well.

\subsection{Support Vector Regression with Kernels}

Suppose (\cite{Hastie}) we have a set of basis functions $H= \{h_i(x), i = 1,\ldots, k\}.$
We are looking for hypotheses 
$$f(x) = \sum_{i=1}^k w_i h_i(x)  + b.$$
The loss criterion  used here is
$$L(f, S) = \sum_{i= 1}^m V\big( \bm{y}(\beta_i) - f(\bm{x}(\beta_i) \big)  + \lambda \|\bm{w}(f)\|^2,$$
where 
\[
V(r) = 
\begin{cases}
	0, & \text{if  } |r| < \epsilon \\
	|r| - \epsilon, & \text{otherwise}.
\end{cases}
\]

Here the transformation  $x \rightarrow \langle h_1(x), \ldots, h_k(x)\rangle$ from a $n$-dimensional space $R^n$ into $k$-dimensional space $H(x)$ may be called focusing.
Then the problem is reduced to solving a linear SVM regression in the transformed space. 
Thus, SVR with kernel supports the main conjecture as well.

\subsection{Ridge Regression} 

The learner  finds the solution in the same class of linear hyperplanes $F = \{f: \; \, f= wx + b\} $ as linear  SVM  for classification, and it has the criterion

$$ L_{rr}(f, S) =  \alpha \|\bm{w}(f)\|^2 + \frac{1}{m} \sum_{\beta \in S}  (f(\bm{x}(\beta)) 	- \bm{y}(\beta))^2.$$

The first component of the loss criterion is regularization, the same as in SVM, SVR. 

The second component can be explained as total proper \gry where the theory's  only collision condition  coincides with the condition $\pi_{pw}$ of point-wise \gry theory $T_{pw}$, the  deviation is defined by the formula
$\delta(\alpha_1, \delta_2) = (\bm{y}(\alpha_1) - \bm{y}(\alpha_2))^2. $
and the recursive aggregation is defined in the line 1 of the table \ref{tab:aggregations}.

Thus,  Ridge regression corroborates the main conjecture too.

\subsection{Neural Network  (NN) }

Let us consider single hidden layer NN for two class classification as it is described in \cite{Hastie}.

First, the learner transforms  $n-$ dimensional metric space of inputs $\bb{R}$ into $k$-dimensional space $\bb{Z}$ using non-linear transformation;
$$Z_i(x) = \delta( g_i(x) ), i = 1, \ldots, k,$$
where $\delta(r)$ is delta function and $g_i$ are linear functions.  Denote $\bm{z}(x)$ the vector with coordinates $\langle Z_1(x), \ldots, Z_k(x) \rangle.$

Then, for each class $c \in \{0, 1\}$, the learner builds  linear voting function  $f_c(\bm{z}(x)).$ 
Denote $G = \{g_1, \ldots, g_k\}$, and $F= \{f_0, f_1\}.$ 

For each $x \in \bb{R}^n$ the class is selected as
$C(x,  G, F) =  \arg \max_c  f_c(\bm{z}(x)).$ 

The learner uses the  loss criterion 
$$L_{nn}(G, F, S) =  \sum_{\beta \in S} ( \bm{y}(\beta) - C(\bm{x}(\beta), G, F) ).$$
It is obvious that he loss criterion is a total proper \gry for  by  the point-wise \gry theory. 

The learner optimizes simultaneously  parameters of the functions $G$ and $F.$ For selection of parameters of these functions the learner uses gradient descent, which is called ``back propagation'' in this case. The learner uses some additional stopping criterion.  

So, the procedure does not have a focusing stage. If calculates loss for given set of parameters,  evaluates  gradients by  each parameter, and then updates parameters based on the gradients. After the stopping criterion is achieved, the algorithm outputs the decision with the lowest loss criterion. 

The procedure has only two types of steps: 
\begin{enumerate}
\item fitting, which includes
\begin{itemize} 
\item  generation of the $C(x, G, F)$ hypothesis based on previous value of loss criterion and gradients 
\item evaluation of loss criterion of the current hypothesis $L_{nn}(G, F, S).$
\end{itemize}
\item optimal selection: selection of the hypothesis with the lowest loss criterion.  
\end{enumerate}

Thus, NN also agrees with the main conjecture.

\subsection{$K$ Means Clustering}

The learner is different from hierarchical clustering in that it does not combine clusters, rather, for each observation, it chooses the proper cluster. It is assumed that the distance on the domain of  data points is Euclidean. 

Here is the description of the learner from \cite{Hastie}. 
	\begin{enumerate}
		\item Given the current set of means of clusters $M = \{m_1, \ldots, m_K\}$, each observation is assigned to the cluster with the closest mean.
		\item The rounds of assignment of all observations are repeated until clusters do not change.
	\end{enumerate}

The proper learning happens when we search for the cluster for the given observed data point.  Denote $C(x)$ the assignment of a cluster to a data point $x$. Given the set of observed  data points $S_x = \{x_1, \ldots, x_m\},$  $K$ clusters  with cluster centers $M$ of the sizes $\{l_1, \ldots, l_K\}$
the procedure assigns a new class to an observed data point  to  minimize sum of all pairwise distances within each cluster
\begin{align}
W(C, S) & = \frac{1}{2} \sum_{k=1}^K \sum_{C(\xi) = k} \sum_{C(\zeta) = k} \| \xi - \zeta\|^2 \label{1line}\\ 
        & = \sum_{k=1}^K l_k \sum_{C(\xi) = k}  \| \xi - \overline{x}_k\|^2,
\end{align}
where $\xi, \zeta \in S$, and $\overline{x}_k$ is mean of the $k$-th cluster. 
I use the (\ref{1line}) to prove that the learner agrees with the main conjecture. 

Denote  $x_0$ a data point, $x_0 \in S_x,$ which we need to assign a cluster on this step.

As in the case of hierarchical clustering,  we consider underlying dependence $\ph$ as a function from cluster index $k$ to the observed data point $x.$  There are $K$ hypotheses $H(x_0) = \{h_1, \ldots, h_K\}.$ Each hypothesis $h_i$ has a single hypothetical case $\asymp( \ph(i) = x_0).$ 

 We assume, before current run of the learner,  the clusters are already assigned to each observed data point besides $x_0$. So, the run starts with the training set having observations $$S = \{ \big(\approx( \ph( i) = x)\big)\;  |  \; i \in 1, \ldots, K;\;  x \in S_x \setminus \{x_0\}\}.$$

There is only one aspect \gry with the collision condition
$$\pi(\alpha_1, \alpha_2) = ( \bm{x}(\alpha_1) = \bm{x}(\alpha_2)),$$
which says that we evaluate deviation for each pair of formulas with the same argument $x$, the same cluster, regardless of modality. The collision condition is symmetrical, therefore for each pair of formulas $\alpha_1, \alpha_2$  which satisfies the condition, the pair $\alpha_2, \alpha_1$ satisfies the condition as well. In effect, every pair is counted twice.  

The deviation function is
$$\delta(\alpha_1, \alpha_2) = \rho_y(\alpha_1, \alpha_2)^2 $$

For the proper aggregation we use the averaging. The formula of loss criterion $W(C, S)$ does not  explicitly have the scaling coefficient $\frac{1}{m}$ because it would be the same for every hypothesis. Otherwise, the loss criterion in this case is the total proper \gry for the described theory. 

The learner generates all hypotheses $H(x_0)$, evaluates the loss criterion for each of them and selects the hypothesis with the lowest loss criterion. Thus this learner corroborates the main conjecture as well.

\section{Conclusions}

Here I propose a modal logic  LOH to explain the learning in  machine learning.
The logic generalizes existing learners to explain, what we do, when we learn.

The underlying dependence we are learning is assumed to be non-deterministic.  The first order formulas of LOH (Logic of hypotheses and observations) are  statements about values of the underlying dependence in some points.  The formulas   always have modalities ``it appears''  or ``assume that':  they  describe observations  and  hypotheses respectively. Being subjective, modal formulas can  not have  truth values,  so they can not have   contradictions, inconsistencies in the strict logical sense. 

The underlying dependence  is expected  to be  predictable in the  vague sense that  ``close'' data points shall correspond to ``close'' feedback. The implied   ``closeness'' depends  on the task: its data types, precision of measurement, goals and so on.     Instead of the predictability, I formalize the opposite concept:  \gryp  It is defined to be flexible to match the tasks  as well.  Each version of  \gry is defined by  its own ``\gry theory'',  where ``collision conditions'' are binary  predicates expressed as second order formulas  of LOH, and the ``deviation''   functions evaluate disagreement between ``colliding''  first order formulas.

The main conjecture of this work is that every learner has a loss criterion which can be presented  as \gry in some  \gry theory and, given the observations, the learner  performs certain steps to find the hypothesis minimizing this loss criterion. 

The  main conjecture is illustrated on large number of popular learners, including SVM,  SVR, hierarchical clustering, $K$ mean clustering, neural network, Naive Bayes and others. 
Each of these learners corroborates the main conjecture. 

Here are some of the advantages of the proposed ML paradigm over traditional statistical one.
\begin{enumerate}
	\item  The framework  provides unified logical justification and explanation for large variety for real life learners used by practitioners. It   explains how and why we can learn from fixed finite data, while statistical learning theory is not able to do it. 
	
	\item  I demonstrated  inner similarity of the regression, classification and clustering methods: all of them are shown to corroborate the main conjecture. Statistical learning theory  can not include  clustering in their concept of learning with ever increasing training set. 

    \item The proposed approach  allows to understand ``regularization'' component of loss criteria as an aspect of \gryp 
    
	\item  Described here a general structure of  a learner  shall facilitate  classification, selection, customization and design of  new learners.   The proposed language can express much wider variety of learners than  are being commonly used. New varieties of  learners may be especially advantageous when the available data are limited.  
	
	\item As an example of such learner customization, I proposed a  version of adaptive $k$-NN learner based on Hoeffding inequality. The learner shall have advantages over ADA $k$-NN for small data.  
	
	\item In addition, the concept of \gry is demonstrated to be helpful for some of common data analysis problems, where statistics approach appears to be inadequate also. 
\end{enumerate}


 I want to point out philosophical implications of these results. The learning is usually considered to be an inductive process: they  say, the decision  ``generalizes'' observations. Philosophers \cite{Popper} noticed logical contradictions of the concept of induction:  how can  a decision  ``follow'' from the data? 

The main conjecture suggests a possible explanation. Suppose, a class of hypotheses is fixed, and  we need a certain  type of agreement between observations  and a hypothesis.    We also believe that the world is somewhat predictable: usually, it does not change sharply. Experience of our and other species teaches us that, otherwise we could not survive  in a rapidly changing, unpredictable environment. This belief is called  here fundamental. 

If we rely on the fundamental belief,  the winning strategy is to find the hypothesis, which violates the agreement with the observations the least: it has the best chance to be good in the future too. And this is exactly what we do in machine learning, minimizing the \gryp 

If the class of hypotheses is infinite, we may not find the optimal decision,  we may only  approximate it. 

 Thus, the main conjecture shows that   learning  (and induction)  work as  ``approximate deduction''.  The (approximate)  decision is ``deducted'' from the \gry theory and the observations. 
 
We believe in predictability, knowing that the dependencies we learn are non-deterministic. This makes testing a critical part of learning cycle. I plan to talk about it in the following work.

\bibliographystyle{plain} 
\bibliography{Smoothing} 

\end{document}